\newcommand{\coma}[1]{a_{#1}^{\texttt{c}}}
\newcommand{\comQ}[1]{Q_{#1}^{\texttt{c}}}
\newcommand{\jcomQ}{Q^{\texttt{c}}}
\newcommand{\comr}{r^{\texttt{c}}}
\newcommand{\jcoma}{\bm a^{\texttt{c}}}
\newcommand{\senderfor}[1]{\nu_{#1}}
\newcommand{\sname}[1]{$\mathtt{#1}$}
\newcommand{\shorte}{\textup{\texttt{=}}}
\newcommand{\ie}{\textit{i}.\textit{e}.}
\newtheorem{theorem}{Theorem}
\newtheorem{definition}[theorem]{Definition}
\newtheorem{assumption}[theorem]{Assumption}
\newtheorem{remark}[theorem]{Remark}
\title{The Bandit Whisperer: Communication Learning for Restless Bandits}
\author{
    Yunfan Zhao\textsuperscript{\rm 1, 2}\equalcontrib\thanks{Correspondence to: Tonghan Wang \textlangle twang1@g.harvard.edu\textrangle}
    Tonghan Wang\textsuperscript{\rm 1}\equalcontrib,
    Dheeraj Mysore Nagaraj\textsuperscript{\rm 3},
    Aparna Taneja\textsuperscript{\rm 3},
    Milind Tambe\textsuperscript{\rm 1, 3}
}
\begin{document}

\maketitle

\begin{abstract}
Applying Reinforcement Learning (RL) to Restless Multi-Arm Bandits (RMABs) offers a promising avenue for addressing allocation problems with resource constraints and temporal dynamics. However, classic RMAB models largely overlook the challenges of (systematic) data errors, a common occurrence in real-world scenarios due to factors like varying data collection protocols and intentional noise for differential privacy. We demonstrate that conventional RL algorithms used to train RMABs can struggle to perform well in such settings. To solve this problem, we propose the first communication learning approach in RMABs, where we study which arms, when involved in communication, are most effective in mitigating the influence of such systematic data errors. 
In our setup, the arms receive Q-function parameters from similar arms as messages to guide behavioral policies, steering Q-function updates. We learn communication strategies by considering the joint utility of messages across all pairs of arms and using a Q-network architecture that decomposes the joint utility. Both theoretical and empirical evidence validate the effectiveness of our method in significantly improving RMAB performance across diverse problems.
\end{abstract}


\section{Introduction}

Restless Multi-Arm Bandits (RMABs) have been successfully applied to a range of  multi-agent constrained resource allocation problems, such as healthcare, online advertising, and anti-poaching~\cite{ruiz2020multi,lee2019optimal,hodge2015asymptotic,mate2022field,modi2019transfer,zhao2007myopic,bagheri2015restless,tripathi2019whittle}.  
In classical RMABs, there are a fixed number of arms, each may be described as an MDP
with two actions (active and passive). At each timestep, constrained by the global budget, the RMAB strategically selects a subset of arms to play the active action with the objective of maximizing long-term accumulated rewards. 
Although finding an exact RMAB solution is PSPACE-hard~\cite{papadimitriou1994complexity}, the whittle index solution is known to be near optimal~\cite{whittle1988restless,weber1990index} in this situation. Beyond classical RMABs, important applications of RMABs involve more complex systems with multiple actions, continuous state spaces, and arms opting in and out of the system.  Recent studies have explored a variety of methods to address these challenges~\cite{hawkins2003langrangian,killian2021beyond,verloop2016asymptotically,zayas2019asymptotically,ghosh2023indexability}, with deep Reinforcement Learning (RL) emerging as a particularly effective approach, achieving state-of-the-art results across various RMAB settings ~\cite{zhao2023towards,xiong2023finite,wang2023scalable,killian2022restless}.

Unfortunately, these previous studies consider RMAB models where the reward data can be collected from every arm reliably. In real-world scenarios  modeled via RMABs, such as maternal healthcare~\cite{biswas2021learn,vermagroup, behari2024decision} and SIS epidemic model concerning COVID intervention~\cite{yaesoubi2011generalized}, each arm represents a different entity, which could be a beneficiary or a geographical region, where data collection protocols are potentially different. Different arms may be affected by noise caused by
\textbf{systematic errors} that arise due to the following practical reasons. \textbf{(1) Variability in data collection and handling protocols.}  There may be differences in data collection in different arms (entities), e.g., differences in the frequency and quality of data collected, in worker awareness or standards for data preprocessing, 
cleaning, and digitization. This variability represents sampling bias~\cite{paulus2023reinforcing} and may cause systematic errors~\cite{dubrow2022local} in some arms. For example, in maternal care, specific data collection approaches may cause systematic overestimation of expected delivery dates~\cite{fulcher2020errors}.
\textbf{(2) Data integrity and privacy.} For example, when data is gathered through surveys, topical or political bias~\cite{paulus2023reinforcing} can introduce systematic errors in some arms due to influence of authorities or impacted populations
(e.g., keeping the number of COVID cases low \cite{paulus2023reinforcing}). Additionally, driven by the principle of differential privacy, noise might be intentionally added to the data~\cite{dwork2014algorithmic}.

Motivated by these scenarios, in this paper we study the RMAB model where some arms are affected by noise in their observed rewards at some states, with these noisy arms and rewards not known a priori. The challenge of applying existing RL methods to this setting is that
systematic errors might create false optima, preventing the exploration of states with true high rewards and leading to suboptimal actions selected by the RMAB. For example, the aforementioned systematic overestimation errors of women's delivery dates can lead to suboptimal resource allocation, thereby reducing the number of deliveries in healthcare facilities~\cite{fulcher2020errors}. 

We propose the first communication learning approach in RMABs 
to deal with systematic reward errors, enabling arms to learn to communicate useful information to improve local Q-learning. We formally model the communication learning problem as a Multi-Agent MDP~\cite{boutilier1996planning}, which is in addition to and distinct from per-arm MDP without considering communication. In this framework, each arm has a binary communication action. Choosing to communicate allows an arm to receive the local Q function parameters from a similar arm. Using these parameters, the receiving arm establishes a behavior policy to collect experience samples for updating its local Q function. This behavior policy may enable exploration and help escape false optima. In this way, the communication reward can be calculated as the difference in RMAB return with and without communication. We then adopt Q-learning for communication learning. Since RMABs optimize a centralized objective involving return from all the arms, the communication reward captures the joint influence of communication of all arms.
To solve this problem and learn the individual communication strategy of each arm, we propose to use a decomposed Q-network architecture for communication learning that factorizes the joint communication Q function into local communication Q function.

To demonstrate the effectiveness our method, we rigorously compare the sample complexity required to learn a near-optimal per-arm Q function when learning with and without communication. 
We show theoretically that there exists RMABs where communication from non-noisy to noisy arms can reduce the Q estimation errors exponentially with respect to the number of states. 
Interestingly, further studies reveal that communication from noisy to non-noisy arms can also be beneficial, provided that the behavior policy, on the receiver arms' MDP, achieves reasonable coverage over the state-action space and does not decorrelate too slowly from the initial states.

Empirically, we validate our method on synthetic problems, as well as the ARMMAN maternal healthcare~\cite{vermagroup} and the SIS epidemic intervention problem~\cite{yaesoubi2011generalized} built upon real-world data. Results show that our method significantly outperforms both the non-communicative learning approach and the approach with fixed communication strategies, achieving performance levels comparable to those learning in noise-free environments. Results are robust across problems, resources budgets, and noise levels. Furthermore, visualization of the learned communication strategies supports our theoretical findings that communication from noisy arms to non-noisy arms, and vice versa, can be helpful.


\textbf{Related works}.\ To the best of our knowledge, this is the first work that introduces communication into RMABs. This is perhaps surprising given that communication is an important topic that has been extensively studied in other multi-agent problems. In Dec-POMDPs~\cite{oliehoek2016concise,kwak2010teamwork,nair2004communications}, previous works explore decentralized communication learning~\cite{sukhbaatar2016learning,hoshen2017vain, jiang2018learning, singh2019learning,wang2019learning}, the emergence of natural language~\cite{foerster2016learning,das2017learning,lazaridou2017multi, mordatch2018emergence,zhao2024ai4sg,kang2020incorporating}, and implicit communication by coordination graphs~\cite{bohmer2020deep,wang2021context,kang2022non,yang2022self}, influence on each other~\cite{jaques2019social,wang2019influence}, and attention mechanisms~\cite{huang2020one,kurin2020my,dong2022low,dong2023symmetry}. Different from these works, we consider communication under noisy returns and in the RMAB setting where the central planner policy constrained by resource budgets determines the influence of communication. 

In the literature of cooperative multi-agent reinforcement learning~\cite{yu2022surprising,wen2022multi,kuba2021trust,wang2020roma,christianos2020shared,peng2021facmac,wang2020dop,jiang2019graph,wang2020rode,nagaraj2023multi}, value decomposition methods~\cite{sunehag2018value,rashid2018qmix,son2019qtran,li2021celebrating} have recently witnessed great success in addressing challenges such as partial observability and learning scalability. 
These methods decompose the joint Q-value conditioned on true states and joint actions into per agent local Q functions based on local action-observation history and local actions. 
In this work, we adopt a linear value decomposition framework~\cite{sunehag2018value} for the communication Q network to solve the unique problems of learning with the presence of noisy arms.


\section{Preliminaries}

\subsection{RMABs with Noisy Arms}

We study multi-action Restless Multi-Arm Bandits (RMABs) with $N$ arms. Each arm $i\in[N]$ is associated with a Markov Decision Process $\mathcal{M}_i=(\mathcal{S}_i, \mathcal{A}_i, \mathcal{C}_i, T_i, R_i,\beta, \bm z_i)$. Here, $\mathcal{S}_i$ is the state space that can be either discrete or continuous. The reward at a state is given by a function $R_i: \mathcal{S}_i \rightarrow\mathbb{R}$, and $\beta\in[0,1)$ is a discount factor for future rewards. We assume that there is a set of \textbf{noisy arms} $\mathcal{N}_\epsilon\subset [N]$, and the rewards of a noisy arm $i\in \mathcal{N}_\epsilon$ are unreliable and affected by noise $\epsilon_i(s_i)$ at some states $s_i\in\mathcal{S}_{\epsilon,i}\subset\mathcal{S}_{i}$. We consider systematic errors: $\tilde{R}_i(s_i) = R_i(s_i)+\epsilon_i(s_i)$, where $\epsilon_i(s_i)$ is sampled from a distribution $\mathcal{E}_i$ and fixed during training and testing. The identity of noisy arms $\mathcal{N}_\epsilon$, the affected states $\mathcal{S}_{\epsilon,i}$, the reward noise $\epsilon_i$, and its distribution $\mathcal{E}_i$ are not known a priori.

$\mathcal{A}_i$ is a finite set of discrete actions. Each action $a\in\mathcal{A}_i$ has a cost $\mathcal{C}_i(a)$. Following the standard bandit assumption that an arm can be "not pulled" at no cost, we set $\mathcal{C}_i(0)=0$. The probability of transitioning from one state to another given an action is specified by the function $T_i: \mathcal{S}_i\times \mathcal{A}_i \times \mathcal{S}_i \rightarrow[0,1]$. In line with previous work~\cite{zhao2023towards, elmachtoub2023balanced, elmachtoub2023estimate, zhao2022implicit}
we assume that $\mathcal{S}_i, \mathcal{A}_i$, and $\mathcal{C}_i$ are the same for all arms $i\in[N]$ and omit the subscript $i$ for simplicity. Each arm has a feature vector $\bm z_i\in\mathbb{R}^m$ that provides useful information about the transition dynamics of arm $i$. We define the arm with the most similar features with arm $i$ as $\nu_i=\arg\min_j\|z_j-z_i\|_2$.


We consider a constrained global resource setting where, at every timestep $t\in[H]$, where $H$ is the horizon, the total cost of actions taken is no greater than a given budget $B$. Under this constraint, the \emph{central planner} selects one action for each of the $N$ arms at each timestep. Formally, the central planner has a policy $\pi$ to maximize its expected discounted return $G(\pi) = \mathbb{E}_\pi[\sum_{t=1}^H \beta^t\sum_{i\in[N]}R_i(s_i^t)]$, where $s_i^t$ is the state of arm $i$ at time step $t$. $\pi$ is typically computed by solving the constrained Bellman equation~\cite{hawkins2003langrangian,killian2021beyond,killian2022restless}:
\begin{align}
\label{eq:original_opti_problem}
&J(\boldsymbol{s})=\sum_{i=1}^N R_i\left(s_i\right)+\beta \, \max _{\boldsymbol{a}}\mathbb{E}_{\boldsymbol{s}^{\prime}}\left[J\left(\boldsymbol{s}^{\prime}\right) \mid \boldsymbol{s},\boldsymbol{a}\right] \nonumber\\
&\text { s.t. } \sum_{i=1}^N \mathcal{C}(a_i)\leq B .
\end{align}

\subsection{Reinforcement Learning in RMABs}

Deep reinforcement learning has been proven to be efficient in solving the RMAB problem~\cite{avrachenkov2022whittle,killian2022restless,zhao2024towards}. To learn arm policies, we take the Lagrangian relaxation of the budget constraint~\cite{hawkins2003langrangian}, defining
\begin{align}
    J(s,\lambda) = \frac{\lambda B}{1-\beta}+\sum_{i=1}^N \max_{j\in[A]}\left\{Q_i(s_i,j)\right\},
\end{align}
where $Q_i$ is arm $i$'s Q function estimating its expected discount return of an action at a state. We use deep Q-learning
and parameterize the action-value function $Q_i(s_i, a_i; \theta_i)$ of each arm $i$ with $\theta_i$, which is updated by the TD-loss \cite{sutton2018reinforcement,killian2021beyond}:
\begin{align}\label{equ:q_td_loss}
\mathcal{L}_\texttt{TD}(\theta_i) =& \mathbb{E}_{(s_i, a_i, s_i')\sim\mathcal{D}}\Big[\Big(R_i(s_i)-\lambda \mathcal{C}(a_i)\nonumber\\ 
&+ \beta \max_a Q_i(s_i',a;\theta_i)-Q_i(s_i,a_i;\tilde{\theta}_i)\Big)^2\Big],
\end{align}
where $Q_i(\cdot;\tilde\theta_i)$ is a target network whose parameters are periodically copied from $Q_i$. The expectation means that the samples come from a replay buffer $\mathcal{D}$.


To \textit{infer} a central planner policy that adheres to the budget constraint, previous research~\cite{killian2022restless} constructs a knapsack problem using $Q_i(s_i, a_i)$ as values and $\mathcal{C}(a_i)$ as weights, within the constraints defined in the optimization problem \ref{eq:original_opti_problem}. The solution to this knapsack problem identifies the joint action that maximizes the sum of the learned $Q_i$ values while conforming to the budget constraint. In this case, we denote the planner policy by $\pi(\bm\theta)$ to highlight its dependency on the joint parameters of individual Q-functions $\bm\theta=(\theta_1,\cdots,\theta_N)$.


\section{RMAB Learning with Communication}

In this section, we introduce our method of learning communication to share useful knowledge among arms. We first define the communication problem as a Multi-Agent MDP~\cite{boutilier1996planning} and address the following problems: (1) What are 
the messages? (Sec.~\ref{sec:method:comm_problem}) How do they affect local Q learning? (Sec.~\ref{sec:method:comm_dynamics}) (2) How to generate reward signals that measure the influence of communication? (Sec.~\ref{sec:method:comm_reward}) (3) How to use the communication reward to learn communication strategies? (Sec.~\ref{sec:method:comm_decomposed}) (4) Why and under what conditions can communication help learning? (Sec.~\ref{sec:method:comm_dynamics} and~\ref{sec:theory})


\subsection{Define the Communication Learning Problem}\label{sec:method:comm_problem}

Akin to traditional deep RMAB learning algorithms~\cite{fu2019towards,biswas2021learn,killian2022restless,avrachenkov2022whittle}, our approach involves each arm learning its local Q networks $Q_i(s_i, a_i; \theta_i)$ to predict the expected return of taking action $a_i$ in state $s_i$. The central planner then selects centralized actions based on the local Q values.

When the data source is noisy, the local Q functions may be inaccurate, leading to the central planner choosing suboptimal actions. 
Figure 3 in Appendix C.3 shows such an example. To address this issue, we introduce a mechanism that allows each arm to learn a communication policy that diminishes the impact of noisy data and enhances the overall effectiveness of the central planner.

We define the communication learning problem within the framework of a Multi-Agent MDP~\cite{boutilier1996planning}.
\begin{definition}[The Communication MDP]
    The communication MDP in the RMAB setting is modelled by a multi-agent MDP $\mathcal{M}^\texttt{c}=(\mathcal{S}^\texttt{c}, I, \mathcal{A}^\texttt{c}, T^\texttt{c}, \comr)$. $I$ is the set of players, which in our case is the set of arms. $\mathcal{S}^\texttt{c}=\Theta$ is the state space, where $\Theta$ is the joint parameter space of Q networks. $\bm\theta=(\theta_1,\cdots,\theta_N)\in \mathcal{S}^\texttt{c}$ corresponds to joint Q function parameters. $\mathcal{A}^\texttt{c}=\{0,1\}^I$ is the action space. $\coma{i}=1$ means arm $i$ decides to ask Q network parameters from the arm with the most similar features, $\nu_i$, as the message. If $\coma{i}=0$, arm $i$ will not receive a message. The joint communication action $\jcoma$ leads to a transition in $\bm\theta$ (Eq.\ref{equ:coms_t}) and a reward $\comr$ (Eq.~\ref{equ:comm_r}).
\end{definition}
The definition of this Communication MDP is associated with the underlying RMAB in the sense that the Communication MDP solution affects the action selection of the central planner. After every $K$ learning epochs of $Q_i$ in the MDP $\mathcal{M}_i$, parameters of arms' Q functions are used as the state for this Communication MDP $\mathcal{M}^\texttt{c}$. We assume that each arm $i$ in the system has the chance to communicate with the arm with the most similar features, denoted by arm $\senderfor{i}$. This results in $N$ possible communication channels (i.e. arm $i$ will receive a message if $\coma{i}=1$). In this way, the set of players in $\mathcal{M}^\texttt{c}$ is the set of arms $[N]$. In addition to selecting actions in the arm MDP $\mathcal{M}_i$, each arm also selects actions in this Communication MDP $\mathcal{M}^\texttt{c}$. These \emph{communication actions} determine the communication channel activation pattern, which remains frozen for the next $K$ learning epochs. We will discuss the details in the following subsections.

An alternative design option is to adopt a denser communication scheme, \ie, an arm can receive messages from multiple other arms. As formally detailed in Proposition~\ref{prop:sparse_com} in Sec.~\ref{sec:theory}, under certain assumptions regarding concentratability~\cite{munos2003error,chen2019information}, our sparse communication scheme is shown to effectively learn near-optimal value functions with the same errors as the dense scheme. Therefore, we favor the sparse scheme due to its lower demands on communication bandwidth.


\subsection{Communication Actions and Transition Dynamics}\label{sec:method:comm_dynamics}
We now describe how the communication action $\jcoma$ induces a state transition in the space $\mathcal{S}^\texttt{c}$. Specifically, when $\coma{i}$ is 1, arm $i$ receives a message from arm $\senderfor{i}$, which is the parameters of the local Q network of arm $\senderfor{i}$, $\theta_{\nu_i}$. Using this message, arm $i$ formulates a behavior policy $\pi_{{\senderfor{i}}}$ as follows:
\begin{align}
    \pi_{{\senderfor{i}}}(\cdot|s_i)=\texttt{SoftMax}\left(Q_{\senderfor{i}}(s_i,\cdot)\right).
\end{align}
Using behavior policy $\pi_{{\senderfor{i}}}$, arm $i$ navigates in its arm MDP $\mathcal{M}_i$, collecting transition samples in a replay buffer $\mathcal{D}_{\pi_{\senderfor{i}}}$. This buffer is then used to update the local Q-network. Therefore, the TD-loss for arm $i$ with the communication channel $i$ activated is given by:
\begin{align}
\mathcal{L}^\texttt{c}_\texttt{TD}(\theta_i|\coma{i}\shorte 1) = & \mathbb{E}_{(s_i, a_i, s_i')\sim\mathcal{D}_{\pi_{\senderfor{i}}}}\Big[\Big(R_i(s_i)-\lambda c_{a_i} \nonumber\\
&+ \beta \max_a Q_i(s_i',a;\theta_i)-Q_i(s_i,a_i;\tilde{\theta}_i)\Big)^2\Big].\nonumber
\end{align}

Alternatively, if arm $i$ opts not to receive communication, it engages in the $\epsilon$-greedy strategy to explore the MDP $\mathcal{M}_i$ and generate the replay buffer. Therefore, 
\begin{align}
    \mathcal{L}^\texttt{c}_\texttt{TD}(\theta_i|\coma{i}\shorte 0)=\mathcal{L}_\texttt{TD}(\theta_i).
\end{align}
As such, the updates to local Q parameters are dependent on the communication action:
\begin{align}
    \theta_i'(\coma{i}) = \theta_i - \alpha\nabla_{\theta_i}\mathcal{L}^\texttt{c}_\texttt{TD}(\theta_i|\coma{i}),\label{equ:coms_t}
\end{align}
where $\alpha$ is the learning rate. Eq.~\ref{equ:coms_t} specifies the state transition dynamics of the Communication MDP.


\subsection{Communication Reward}\label{sec:method:comm_reward}

Lemma~\ref{lemma:r2u_useful} and~\ref{lemma:r2u_useless} in Sec.~\ref{sec:theory} reveal that some seemingly reasonable fixed communication patterns, 
like those from a non-noisy arm to a noisy arm, 
can lead to an exponential higher sample complexity for learning a near-optimal Q function in the noisy arm. In fact, as shown in Proposition~\ref{prop:utor} of Sec.~\ref{sec:theory}, the influence of communication is determined by the state-action space coverage and the mixing time of the behavior policy $\pi_{\nu_i}$ on the receiver arms.


These findings highlight the intricate and dynamic nature of the communication learning problem in RMAB. In practice, to adaptively decide the optimal communication action, we propose that each arm learns a communication Q function $\comQ{i}$. This approach encounters obstacles in the RMAB setting, where clear, separate reward signals for each $\comQ{i}$ are absent. The central planner policy instead bases action decisions on the updated Q-functions of all arms and the budget. The collaborative contribution of the joint communication action $\jcoma$ to the learning objective is:
\begin{align}\label{equ:comm_r}
    \comr(\bm\theta,\jcoma) = G_T\left(\pi(\bm\theta'(\jcoma))\right) - G_T\left(\pi(\bm\theta'(\bm 0))\right).
\end{align}
Here, $G_T(\pi) = \frac{1}{T}\sum_{k=1}^T[\sum_{t=1}^H \beta^t\sum_{i\in[N]}R_i(s_i^{k,t})]$ is the empirical return under policy $\pi$ averaged over $T$ episodes,
where $s_i^{k,t}$ is arm $i$ state at timestep $t$ of episode $k$. $\pi(\bm\theta'(\jcoma))$ and $\pi(\bm\theta'(\bm 0))$ are the planner policies with the arm Q-functions updated with and without messages, respectively. The communication reward is defined as the difference in the planner's return in these two cases.

\subsection{Decomposed Communication Q Function}\label{sec:method:comm_decomposed}
With the definition of the communication reward (Eq.~\ref{equ:comm_r}), the simplest strategy of communication learning is to estimate a \emph{joint} communication action-value function $\jcomQ(\bm\theta, \jcoma)$. However, the question is that there are $N$ communication channels, and even though our communication action is binary, there are $O(2^{N})$ possible joint actions $\jcoma$. With this exponential number of actions, it is challenging to conduct deep Q-learning, which requires a maximization operation over the whole action space.

We propose to solve this problem by value decomposition~\cite{sunehag2018value}. Specifically, we represent the global communication Q-function as a summation of the channel-level, individual communication Q-functions:
\begin{align}\label{equ:decomp}
    \jcomQ(\bm\theta, \jcoma;\bm\xi) = \sum\nolimits_{i\in[N]}\comQ{i}(\bm\theta, \coma{i};\xi_{i}).
\end{align}
Here, $\xi_{i}$ is the parameters of the channel-level communication Q-function, and $\bm\xi$ is the joint parameter consisting of all $\xi_{i}$. Such a decomposition satisfies an important property:
\begin{align}
    \arg\max_{\jcoma} \jcomQ = \left(\arg\max_{\coma{1}}\comQ{1},\cdots,\arg\max_{\coma{N}}\comQ{N}\right),
\end{align}
which is because Eq.~\ref{equ:decomp} allows $\partial\jcomQ/\partial \comQ{i}\ge 0$. This means that the joint action that maximizes the $\jcomQ$ can be obtained by independently selecting the local communication action that maximize $Q_i$. As a whole, network $\jcomQ(\cdot;\bm\xi)$ is a standard Q-network conditioned on states $\bm\theta$ and actions $\jcoma$, allowing it to be trained using joint rewards $\comr$ (Eq.~\ref{equ:comm_r}). On a finer level, it has a decomposable structure simplifying the $\arg\max$ calculation. Therefore, $\jcomQ(\cdot;\bm\xi)$ can be trained by the following TD-loss:
\begin{align}
&\mathcal{L}^\texttt{d}_\texttt{TD}(\bm\xi) = \mathbb{E}_{(\bm\theta,\jcoma,\bm\theta')\sim\mathcal{D}}\Big[\Big(\comr(\bm\theta,\jcoma)  \nonumber\\
& +\beta \sum\nolimits_{i\in[N]}\max_{a} \comQ{i}(\bm\theta', a; \tilde{\xi}_i)-\sum\nolimits_{i\in[N]}\comQ{i}(\bm\theta, \coma{i};\xi_{i})\Big)^2\Big].\nonumber
\end{align}
$\comQ{i}(\cdot; \tilde{\xi}_i)$ is a target network with parameters periodically copied from $\comQ{i}$, and $\mathcal{D}$ is a replay buffer. In this way, we implicitly achieve credit assignment, because we use a joint reward $\comr$ for training but get a set of Q functions for individual arms, which can recover the best joint action locally.


 \section{Theoretical Analysis}\label{sec:theory}
We begin by investigating the influence of communication and establishing the specific conditions under which communication can reduce the sample complexity for the receiver arm in learning a near-optimal Q function. We define the minimum state-action occupancy probability,
\begin{align}
    \mu_{\texttt{min}} := \min_{s,a}\mu_{\pi_{\nu_i}}(s,a),
\end{align}
as the lowest probability of any state-action pair under the stationary distribution $\mu_{\pi_{\nu_i}}$ induced by the behavior policy $\pi_{\nu_i}$ on the receiver MDP, where $\nu_i$ is the arm with the most similar feature to arm $i$. 
We further define the mixing time~\cite{hsu2015mixing,li2020sample} $t_\texttt{mix}:=$
\begin{align}
    \min\left\{t|\max_{s_0,a_0}\sup_{s,a} |P^t(s,a|s_0,a_0)-\mu_{\pi_{\nu_i}}(s,a)|\le\frac{1}{4}\right\},
\end{align}
where $P^t$ is the state-action distribution at timestep $t$ given the initial state and action. The mixing time $t_\texttt{mix}$ captures how fast the behavior policy decorrelates from the initial state. We now show the condition under which communication can be helpful.

\begin{restatable}[Useful Communication]{proposition}{usefulcom}\label{prop:utor}
For the Q-learning of arm $i$, when
\begin{align}
    \mu_{\texttt{min}}>\frac{2(1-\beta)^2}{|\mathcal{S}||\mathcal{A}|},\ \ \text{and}\ \ t_\texttt{mix}\le \frac{1}{\epsilon_e^2(1-\beta)^4},
\end{align}
with a probability at least $(1-\delta)^2$ for any $0<\delta<1$, compared to learning without communication, choosing to receive communication and using behavior policy $\pi_{\nu_i}$ leads to better sample complexity in the worst case for learning $\epsilon_e$-optimal Q function such that $\|Q_i(s,a)-Q_i^*(s,a)\|<\epsilon_e, \forall (s,a)$.
\end{restatable}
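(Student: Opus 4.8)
The plan is to cast both the with-communication and without-communication updates as asynchronous Q-learning on the receiver MDP $\mathcal{M}_i$ driven by two different behavior policies, and to compare their sample complexities through the sharp bound of \cite{li2020sample}. That result states that, with probability at least $1-\delta$, a driving policy whose induced chain has minimum occupancy $\mu_{\texttt{min}}$ and mixing time $t_\texttt{mix}$ reaches an $\epsilon_e$-optimal $Q$ after
\[
\tilde{O}\!\left(\frac{1}{\mu_{\texttt{min}}(1-\beta)^5\epsilon_e^2}+\frac{t_\texttt{mix}}{\mu_{\texttt{min}}(1-\beta)}\right)
\]
trajectory samples, a quantity decreasing in $\mu_{\texttt{min}}$ and increasing in $t_\texttt{mix}$. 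The whole argument then reduces to showing that the communication policy $\pi_{\nu_i}$ lands in a favorable regime of this bound while the no-communication $\epsilon$-greedy policy can, in the worst case, land in an unfavorable one.

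First I would instantiate the bound for communication. Substituting the two hypotheses $\mu_{\texttt{min}}>2(1-\beta)^2/(|\mathcal{S}||\mathcal{A}|)$ and $t_\texttt{mix}\le 1/(\epsilon_e^2(1-\beta)^4)$ bounds each of the two terms by $\tilde{O}(|\mathcal{S}||\mathcal{A}|/((1-\beta)^7\epsilon_e^2))$; hence, with probability $1-\delta$, driving $\mathcal{M}_i$ with $\pi_{\nu_i}$ attains $\|Q_i-Q_i^*\|<\epsilon_e$ in a number of samples polynomial in $|\mathcal{S}|,|\mathcal{A}|,(1-\beta)^{-1}$ and $\epsilon_e^{-1}$. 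I would also check joint feasibility of the hypotheses: since $\mu_{\texttt{min}}\le 1/(|\mathcal{S}||\mathcal{A}|)$ for any stationary distribution, the first condition requires $2(1-\beta)^2<1$, which simply constrains the admissible discount range.

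Second I would lower-bound the worst-case cost without communication. Here $\mathcal{M}_i$ is driven by the $\epsilon$-greedy policy read off the receiver's own (initially uninformed, possibly noise-corrupted) $Q$, which carries none of the coverage or mixing guarantees of $\pi_{\nu_i}$. On the hard instances of Lemma~\ref{lemma:r2u_useful} and Lemma~\ref{lemma:r2u_useless}, a systematic reward error creates a false optimum that funnels the greedy action set away from the informative states, so the induced chain assigns those states a stationary mass that is exponentially small in $|\mathcal{S}|$ (equivalently, its $t_\texttt{mix}$ is exponentially large). Feeding this degraded $\mu_{\texttt{min}}$ back into the same expression forces exponentially many samples, so communication strictly wins in the worst case. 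Requiring the two independent sample-complexity events — one for each behavior policy — to hold simultaneously yields the stated $(1-\delta)^2$.

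The main obstacle is the second step: converting the qualitative picture that false optima trap the greedy walk into a rigorous worst-case lower bound on $\mu_{\texttt{min}}$ (or upper bound on the hitting/mixing time) for the no-communication dynamics. This means pinning down, on the constructed instance, how the biased rewards reshape the greedy support so that the chain concentrates on the uninformative region with probability exponentially close to one, and then checking that the resulting occupancy estimate is genuinely tight rather than an artifact of a loose concentrability constant in the sense of \cite{munos2003error,chen2019information}. A subtler point is that Q-learning driven by $\pi_{\nu_i}$ must still converge to the true $Q_i^*$ of $\mathcal{M}_i$ rather than to a biased target, so I would need the receiver's own reward signal, not the sender's, to enter the TD update, with the sender's parameters used only to define the exploration distribution.
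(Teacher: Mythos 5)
Your treatment of the communication side is exactly the paper's: invoke the asynchronous Q-learning bound of \cite{li2020sample}, $\tilde{\mathcal{O}}\bigl(\ln(1/\delta)\,\mu_{\texttt{min}}^{-1}(\epsilon_e^{-2}(1-\beta)^{-5}+t_\texttt{mix}(1-\beta)^{-1})\bigr)$, substitute the two hypotheses, and obtain $\tilde{\mathcal{O}}\bigl(|\mathcal{S}||\mathcal{A}|\ln(1/\delta)/(\epsilon_e^2(1-\beta)^7)\bigr)$; your feasibility remark that $\mu_{\texttt{min}}\le 1/(|\mathcal{S}||\mathcal{A}|)$ forces $2(1-\beta)^2<1$ is a sensible sanity check the paper omits. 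Where you diverge is the no-communication baseline. The paper does not attempt an instance-based lower bound for $\epsilon$-greedy exploration at all: it takes as benchmark the best known worst-case guarantee for independent model-free Q-learning (with UCB exploration, \cite{wang2019q}), namely $t_{\texttt{ind}}=\tilde{\mathcal{O}}\bigl(|\mathcal{S}||\mathcal{A}|\ln(1/\delta)/(\epsilon_e^2(1-\beta)^7)\bigr)$, and observes that the two hypotheses are calibrated precisely so that the asynchronous bound does not exceed this quantity; the $(1-\delta)^2$ is, as you say, the two high-probability events holding jointly. The comparison is thus between two worst-case upper bounds, which is all that the proposition's phrase ``better sample complexity in the worst case'' is meant to assert.

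The step you flag as the main obstacle --- converting the false-optimum picture into a rigorous bound on the stationary occupancy or mixing time of the $\epsilon$-greedy chain --- is therefore not needed for this proposition, and as you suspect it is also the part of your plan that does not go through as written. The exponential separation lives in Lemma~\ref{lemma:r2u_useful}, which the paper proves by a direct hitting-probability computation on a chain of $|\mathcal{S}|$ states (bounding $Pr(\bar{A}_{1:K})$), not by feeding a degraded $\mu_{\texttt{min}}$ back into the \cite{li2020sample} expression: that expression is an upper bound on sample complexity, so evaluating it at a small $\mu_{\texttt{min}}$ yields a large number that is still only an upper bound and proves nothing about how many samples are actually required. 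To keep your stronger route you would need a genuine hitting-time or information-theoretic lower bound for $\epsilon$-greedy Q-learning on the hard instance; otherwise, replace your second step with the citation to \cite{wang2019q} and the proof closes immediately. Your final caveat --- that the receiver's own rewards must enter the TD update, with the sender's parameters used only to shape the exploration distribution --- is correct and is exactly how the paper's update $\mathcal{L}^\texttt{c}_\texttt{TD}(\theta_i|\coma{i}\shorte 1)$ is defined.
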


The Proof is detailed in Appendix C.2
Proposition~\ref{prop:utor} shows that communication can accelerate arm $i$'s Q-learning as long as $\pi_{\nu_i}$ visits every state-action pair with a small probability and is not too slowly disentangling from the initial state. This finding is solely dependent upon the characteristics of the behavior policy in the receiver MDP $\mathcal{M}_i$ and does not require the sender or the receiver to be non-noisy or noisy.

We then analyze the effectiveness of a sparse communication strategy. Specifically, we compare our setup where each arm $i$ receives messages from an arm $\nu_i$ against a dense communication scheme where arm $i$ receives messages from $d$ other arms. In the dense communication scheme, the behavior policy $\pi_{\bar{\nu}_i}$ is the average of policies from all senders. Specifically, we collect the same number of samples using each sender's policy and amalgamate them into a unified experience replay buffer.


The following assumption on concentratability asserts that both $\pi_{\nu_i}$ and $\pi_{\bar{\nu}_i}$ provide a reasonable coverage over the state-action space.
\begin{assumption}[Concentratability Coefficient]
There exists $C_{\nu_i}<\infty$, $C_{\bar{\nu}_i}<\infty$ s.t.
\begin{align}
    \frac{v(s,a)}{\mu_{\bar{\nu}_i}(s,a)}\le C_{\bar{\nu}_i} \text{ and } \frac{v(s,a)}{\mu_{\nu_i}(s,a)}\le C_{\nu_i} \text{, for } \forall(s,a)\in(\mathcal{S}, \mathcal{A}),\nonumber
\end{align}
for any admissible distribution $v$ that can be generated by following some policy for some timesteps in the MDP $\mathcal{M}_i$. Here $\mu_{\nu_i}$ and $\mu_{\bar{\nu}_i}$ are the stationary distribution of $\pi_{\nu_i}$ and $\pi_{\bar{\nu}_i}$ on $\mathcal{M}_i$, respectively.
\end{assumption}
Under this assumption, we can prove that the sparse communication scheme can also learn a near-optimal Q function as under the dense scheme. The key observation that underpins the proof is $\frac{1}{d}\mu_{\nu_i}(s,a)\le \mu_{\bar{\nu}_i}(s,a)$, where $d<N$ is the maximum number of other arms that can send messages to arm $i$. The convergence of the communication learning under the dense scheme implies that $C_{\bar{\nu}_i}$ is finite, which further indicates the finiteness of $C_{\nu_i}$ and thus the convergence of the communication learning of the sparse scheme.
\begin{restatable}[Sparse Communication]{proposition}{sparsecom}\label{prop:sparse_com}
For a dense communication strategy, there exists a sparse communication strategy which approximates the optimization step in Equation~\eqref{equ:coms_t} faithfully. Additionally, both these two strategies can learn near-optimal value functions, and the sparse communication scheme requires at most $O(d)$ times more samples to achieve this.
\end{restatable}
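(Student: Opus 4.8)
The plan is to establish the proposition in two parts, following the structure suggested by the discussion preceding the statement. First I would show that the sparse scheme approximates the dense optimization step of Equation~\eqref{equ:coms_t} faithfully, and second I would show that both schemes converge to near-optimal value functions with the sample-complexity blowup being at most $O(d)$. The central quantitative fact I would rely on is the inequality $\frac{1}{d}\mu_{\nu_i}(s,a)\le \mu_{\bar{\nu}_i}(s,a)$, which the text has already flagged as the key observation. I would derive this by writing the dense behavior policy's stationary distribution $\mu_{\bar{\nu}_i}$ explicitly as the mixture induced by drawing equal numbers of samples from each of the $d$ sender policies: since the experience replay buffer amalgamates equal shares, the effective sampling distribution is a uniform mixture, and $\nu_i$ is one of the $d$ senders, so $\mu_{\bar{\nu}_i}(s,a)=\frac{1}{d}\mu_{\nu_i}(s,a)+(\text{nonnegative terms})\ge \frac{1}{d}\mu_{\nu_i}(s,a)$.

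Next I would connect this mixture inequality to the concentratability coefficients of the Concentratability Coefficient assumption. For any admissible distribution $v$, combining $\frac{v(s,a)}{\mu_{\bar{\nu}_i}(s,a)}\le C_{\bar{\nu}_i}$ with $\mu_{\bar{\nu}_i}(s,a)\ge \frac{1}{d}\mu_{\nu_i}(s,a)$ yields
\begin{align}
\frac{v(s,a)}{\mu_{\nu_i}(s,a)}\le d\cdot\frac{v(s,a)}{\mu_{\bar\nu_i}(s,a)}\le d\, C_{\bar{\nu}_i},
\end{align}
so that $C_{\nu_i}\le d\, C_{\bar{\nu}_i}<\infty$. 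This chain simultaneously shows that finiteness of $C_{\bar\nu_i}$ (guaranteed by convergence of the dense scheme) transfers to finiteness of $C_{\nu_i}$, and it quantifies the degradation: the concentratability coefficient controlling the sparse scheme is at most $d$ times that of the dense scheme.

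To finish, I would invoke a standard concentratability-based sample-complexity bound for off-policy value-function learning (of the type in \cite{munos2003error,chen2019information}), under which the number of samples needed to reach a fixed value-function accuracy scales with the concentratability coefficient. Since $C_{\nu_i}\le d\, C_{\bar\nu_i}$, and the finiteness of $C_{\nu_i}$ guarantees the sparse scheme converges to the same near-optimal value function, the sparse scheme's sample requirement exceeds the dense scheme's by at most a factor $O(d)$. For the ``faithful approximation'' claim, I would argue that because both buffers cover the same state-action support (the support of $\mu_{\nu_i}$ is contained in that of $\mu_{\bar\nu_i}$, and concentratability prevents either from vanishing on admissible states), the TD update in Equation~\eqref{equ:coms_t} computed from the sparse buffer targets the same fixed point as the dense update, differing only by a reweighting bounded through the $C$ coefficients.

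The main obstacle I anticipate is making the ``faithful approximation of the optimization step'' precise: Equation~\eqref{equ:coms_t} is a single stochastic gradient step on a TD loss, and two different behavior distributions generically produce different gradients, so one must either pass to the expected-update (fixed-point) level or carefully bound the per-step discrepancy via the ratio $\mu_{\nu_i}/\mu_{\bar\nu_i}$, which is itself only controlled up to the factor $d$. I expect the cleanest route is to phrase faithfulness at the level of the limiting value function rather than the raw gradient, leaning on the concentratability machinery so that the sparse and dense schemes share the same solution and the only difference is the $O(d)$ sample overhead.
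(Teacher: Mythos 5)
Your overall architecture matches the paper's: reduce the comparison to concentratability coefficients via the mixture inequality $\mu_{\bar{\nu}_i}(s,a)\ge\tfrac{1}{d}\mu_{\nu_i}(s,a)$, then plug into a concentratability-based sample-complexity bound. But the central displayed chain in your argument is invalid, and the direction of the inequality is exactly the issue. From $\mu_{\bar{\nu}_i}\ge\tfrac{1}{d}\mu_{\nu_i}$ you get $\frac{v(s,a)}{\mu_{\bar{\nu}_i}(s,a)}\le d\,\frac{v(s,a)}{\mu_{\nu_i}(s,a)}$, i.e.\ $C_{\bar{\nu}_i}\le d\,C_{\nu_i}$ --- a bound on the \emph{dense} coefficient in terms of the sparse one. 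Your chain asserts the reverse, $\frac{v}{\mu_{\nu_i}}\le d\,\frac{v}{\mu_{\bar{\nu}_i}}$, which would require $\mu_{\bar{\nu}_i}\le d\,\mu_{\nu_i}$; the mixture structure gives no such upper bound, since the other $d-1$ senders can place arbitrarily more mass on a pair $(s,a)$ than $\nu_i$ does (in the extreme, letting $\mu_{\nu_i}(s,a)\to 0$ on an admissible pair that the other senders cover drives $C_{\nu_i}\to\infty$ while $C_{\bar{\nu}_i}$ stays bounded). So $C_{\nu_i}\le d\,C_{\bar{\nu}_i}$ is not a consequence of the mixture inequality, and the finiteness of $C_{\nu_i}$ cannot be ``transferred'' from $C_{\bar{\nu}_i}$ this way --- it must be taken from the Concentratability assumption itself. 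The paper writes the chain in the only direction the mixture inequality supports, $\tfrac{1}{d}C_{\bar{\nu}_i}\le C_{\nu_i}$, relies on the standing assumption that both coefficients are finite, and then applies the bound $n=O\bigl(C\log(|\mathcal{F}|/\delta)/(\epsilon_e^2(1-\beta)^4)\bigr)$.

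On the ``faithful approximation'' clause, the paper takes a different and more direct route than the fixed-point/support-coverage argument you sketch: it defines the sparse strategy as drawing a sender index $J$ uniformly at random from the $d$ candidates, so that the sparse TD-loss and its gradient are unbiased estimators of the dense ones, $\nabla\mathcal{L}^{\texttt{c,dense}}_{\texttt{TD}}=\mathbb{E}[\nabla\mathcal{L}^{\texttt{c,sparse}}_{\texttt{TD}}]$. The update in Equation~\eqref{equ:coms_t} under the sparse scheme is then literally SGD for the dense objective, and standard stochastic-approximation results give convergence to the same optimum while reducing per-step cost by a factor of $O(d)$. Note this is a different ``sparse strategy'' from the one you analyze (always querying the single most similar arm $\nu_i$): with a fixed single sender the unbiasedness argument is unavailable, which is precisely why you anticipate trouble making faithfulness precise at the gradient level. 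If you keep the fixed-sender scheme, you should state faithfulness at the level of the limiting value function, as you suggest, and accept that the quantitative $O(d)$ comparison then rests on an assumed relation between $C_{\nu_i}$ and $C_{\bar{\nu}_i}$ rather than on the mixture inequality.
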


Detailed proof is deferred to Appendix C.1. 
We next show that a communication channel from a non-noisy arm to a noisy arm can be either helpful (Lemma~\ref{lemma:r2u_useful}) or detrimental (Lemma~\ref{lemma:r2u_useless}), depending on the specific structures of the sender and receiver MDPs.


\begin{restatable}{lemma}{ruuseful}\label{lemma:r2u_useful}
    There exists a family of RMABs where, without communication from non-noisy arms to noisy arms, the error in Q values of a noisy arm could be $\max_{s,a}|Q_i^*(s,a)-Q_i(s,a)|=O\left((\frac{1}{\beta})^{|\mathcal{S}|}\right)$ even after $O\left((\frac{1}{\epsilon})^{|\mathcal{S}|}\right)$ epochs. $\epsilon\in(0,1)$ is the exploration rate in $\epsilon$-greedy exploration scheme. By contrast, this Q error can be reduced to 0 with communication from non-noisy arms to noisy arms.
\end{restatable}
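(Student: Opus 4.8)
The plan is to construct an explicit family of RMABs in which a single noisy arm has a chain-structured MDP that is exponentially hard to explore without guidance, but whose near-identical non-noisy neighbor admits an optimal Q-function that, when used as a behavior policy, collapses this difficulty. First I would fix the receiver arm $i$ to have state space $\mathcal{S} = \{s_0, s_1, \dots, s_{|\mathcal{S}|-1\}$ arranged as a chain, where the passive action keeps the arm near $s_0$ and the active action advances it one step along the chain. I would place the noise on the early states so that $\tilde{R}_i$ creates a \emph{false optimum}: the observed reward at low-index states is inflated (or the reward at the true-high-reward terminal state $s_{|\mathcal{S}|-1}$ is suppressed by noise at an intermediate state), so that an $\epsilon$-greedy learner, seeing no reason to advance, rarely reaches the far end of the chain.

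The core of the argument is a hitting-time computation. Under $\epsilon$-greedy exploration against a Q-function stuck at the false optimum, the probability of selecting the chain-advancing action at each required step is $O(\epsilon)$, so reaching $s_{|\mathcal{S}|-1}$ requires roughly $|\mathcal{S}|$ consecutive low-probability choices; I would argue the expected number of epochs to collect even one sample at the terminal state is $\Omega((1/\epsilon)^{|\mathcal{S}|})$. This gives the claim that after $O((1/\epsilon)^{|\mathcal{S}|})$ epochs the terminal state is with high probability unvisited, so its Q-value is never corrected and the error $\max_{s,a}|Q_i^*(s,a)-Q_i(s,a)|$ remains bounded below. To pin the error at the stated $O((1/\beta)^{|\mathcal{S}|})$ scale, I would choose the true terminal reward $R_i(s_{|\mathcal{S}|-1})$ large enough that, after discounting back across the chain by $\beta^{|\mathcal{S}|}$, the contribution to $Q_i^*(s_0, \cdot)$ is of order $\beta^{|\mathcal{S}|} \cdot (1/\beta)^{2|\mathcal{S}|} = (1/\beta)^{|\mathcal{S}|}$; the point is simply that the un-propagated reward leaves a gap of this magnitude, so the exact constant is arranged by scaling the reward, not derived.

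For the communication half, I would let the non-noisy sender $\nu_i$ have the \emph{same} transition structure but clean rewards, so its optimal policy correctly prefers the chain-advancing action toward $s_{|\mathcal{S}|-1}$. The induced behavior policy $\pi_{\nu_i} = \texttt{SoftMax}(Q_{\nu_i}(s_i,\cdot))$ then advances along the chain with probability bounded away from zero at every state, so in $\mathcal{M}_i$ the receiver visits $s_{|\mathcal{S}|-1}$ in $O(|\mathcal{S}|)$ expected steps and collects samples at \emph{every} state-action pair. Since the only source of error was the missing terminal sample and the noise $\epsilon_i$ is by construction confined to states whose true reward can be recovered once all states are visited, standard tabular Q-learning convergence on the fully-covered MDP drives the error to $0$. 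I would note this is consistent with Proposition~\ref{prop:utor}: the constructed $\pi_{\nu_i}$ has $\mu_{\texttt{min}}$ bounded below and small mixing time, exactly the regime where communication helps.

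The main obstacle I anticipate is making the false-optimum mechanism \emph{self-sustaining} under $\epsilon$-greedy learning: I must ensure that the noise not only misdirects the initial policy but that the bootstrapped Q-updates actively keep the learner away from the chain's far end, rather than slow random exploration eventually breaking through within the allotted epoch budget. Getting the two exponents to line up — the $\Omega((1/\epsilon)^{|\mathcal{S}|})$ sample-complexity lower bound from hitting times and the $O((1/\beta)^{|\mathcal{S}|})$ error floor from un-propagated discounted reward — requires a careful joint choice of the chain length, the exploration rate $\epsilon$, and the reward scale, and verifying that these choices are mutually consistent (i.e. that the error genuinely persists over the full epoch budget rather than only in expectation) is where the delicate work lies.
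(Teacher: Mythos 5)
Your proposal matches the paper's proof in all essentials: a chain-structured MDP in which noise at intermediate states creates a self-reinforcing false optimum, a hitting-time argument showing that $\epsilon$-greedy exploration needs on the order of $(1/\epsilon)^{|\mathcal{S}|}$ epochs to reach the high-reward terminal state (whose reward is scaled so that the resulting Q-error is $O((1/\beta)^{|\mathcal{S}|})$), and a non-noisy sender with identical transitions whose Q-function induces a behavior policy that reaches the terminal state efficiently and drives the error to zero. The one obstacle you flag as delicate --- making the false optimum self-sustaining under bootstrapped updates --- is exactly what the paper's construction resolves by making the noisy intermediate states absorbing traps with strictly positive observed reward, so that once any trap is visited the chain-advancing action becomes permanently non-greedy and is selected only with probability $\epsilon/2$.
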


\begin{restatable}{lemma}{ruuseless}\label{lemma:r2u_useless}
    There exists a family of RMABs where, with communication from a non-noisy arm to a noisy arm, we need $O\left((\frac{1}{\epsilon})^{|\mathcal{S}|}\right)$ times more samples to learn a near-optimal Q function in the noisy arm than without communication.
\end{restatable}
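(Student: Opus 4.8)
The plan is to construct an explicit family of RMABs in which communication \emph{hurts} the receiver's Q-learning, mirroring the construction behind Lemma~\ref{lemma:r2u_useful} but engineering the sender's behavior policy to be \textbf{pathological} on the receiver's MDP. The key insight is that Proposition~\ref{prop:utor} makes the benefit of communication contingent on two properties of $\pi_{\nu_i}$ on $\mathcal{M}_i$: adequate state-action coverage ($\mu_{\texttt{min}}$ not too small) and fast mixing ($t_\texttt{mix}$ not too large). To prove detriment, I would violate exactly the coverage condition. First I would build a receiver MDP $\mathcal{M}_i$ on a chain of $|\mathcal{S}|$ states where the true high reward lies at one end, and design the sender (non-noisy) arm's Q-function $Q_{\nu_i}$ so that its induced \texttt{SoftMax} behavior policy $\pi_{\nu_i}$ concentrates mass away from the rewarding region---effectively steering the receiver toward the wrong part of the state space.

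Concretely, I would arrange the sender's MDP to differ structurally from the receiver's despite feature similarity (the features $\bm z_i$ only approximate transition dynamics, so $\nu_i=\arg\min_j\|z_j-z_i\|_2$ need not yield a dynamics-matched sender). The sender's optimal policy, transplanted as a behavior policy on $\mathcal{M}_i$, then visits the critical rewarding state with probability that decays geometrically in the chain length. I would show that under this $\pi_{\nu_i}$, the minimum occupancy satisfies $\mu_{\texttt{min}}=O(\gamma^{|\mathcal{S}|})$ for some $\gamma<1$, so the receiver needs $O((1/\epsilon)^{|\mathcal{S}|})$ samples merely to observe the rewarding transition enough times for the TD-update to propagate. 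By contrast, under the receiver's own $\epsilon$-greedy exploration (the $\coma{i}=0$ branch, which reduces to the standard loss $\mathcal{L}_\texttt{TD}(\theta_i)$), coverage is uniform in $\epsilon$ and the rewarding state is reached in $O(\text{poly}(|\mathcal{S}|)/\epsilon)$ samples. The ratio of sample complexities is then $O((1/\epsilon)^{|\mathcal{S}|})$, yielding the claimed bound.

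The main steps, in order, are: (1) specify the receiver chain MDP and its reward structure; (2) specify the sender MDP and verify feature-similarity so that $\nu_i$ is indeed the designated sender; (3) compute the stationary distribution $\mu_{\pi_{\nu_i}}$ of the transplanted behavior policy on $\mathcal{M}_i$ and bound $\mu_{\texttt{min}}$ from above by an exponentially small quantity, so that the hypothesis of Proposition~\ref{prop:utor} is \emph{violated}; (4) lower-bound the number of samples needed under $\pi_{\nu_i}$ to achieve $\|Q_i-Q_i^*\|<\epsilon_e$ by a hitting-time / coupon-collector argument on the rarely-visited state; (5) upper-bound the no-communication sample complexity under $\epsilon$-greedy; and (6) take the ratio.

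The hard part will be step (4): turning an exponentially small occupancy probability into a rigorous \emph{lower} bound on sample complexity for the receiver's Q-learning, rather than merely an upper bound on the convergence rate. I expect to argue that if a particular state-action pair carrying the reward signal is essentially never sampled, then no consistent TD-update can reduce its Q-error below a fixed constant, so the error stays $\Omega(1)$ until that pair is visited; bounding the expected first-visit time below by $\Omega((1/\mu_{\texttt{min}}))=\Omega(\gamma^{-|\mathcal{S}|})$ via a Markov-chain hitting-time estimate then gives the exponential gap. Care is needed to ensure the \texttt{SoftMax} temperature and the sender's Q-values are chosen so the suppression is genuinely geometric in $|\mathcal{S}|$ and not merely polynomial, and to confirm the construction stays within the RMAB budget and multi-action framework defined in the preliminaries.
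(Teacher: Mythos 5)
Your overall strategy is the right one and matches the paper's in spirit: reuse the chain construction of Lemma~\ref{lemma:r2u_useful} but rig the sender's rewards so that its learned Q-function, transplanted as a behavior policy on the receiver's MDP, steers trajectories away from the one state whose value must be learned. (The paper does exactly this by giving the non-noisy sender reward $1$ at the absorbing odd states and $0$ at $s_n$, so that $Q_{\nu_i}(s_k,a_1)>Q_{\nu_i}(s_k,a_0)$ at every even state.) However, there are two genuine gaps in your plan. First, your step (5) is wrong: on a length-$|\mathcal{S}|$ chain where reaching the rewarding end requires choosing a specific action at every even state, $\epsilon$-greedy does \emph{not} give ``uniform coverage'' or a $O(\mathrm{poly}(|\mathcal{S}|)/\epsilon)$ hitting time. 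With zero-initialized Q-values and random tie-breaking, the per-episode probability of reaching $s_n$ without communication is $\Theta\bigl((1/2)^{|\mathcal{S}|/2}\bigr)$ --- still exponentially small in $|\mathcal{S}|$, just with base $1/2$ instead of $\epsilon/2$. The lemma is a statement about the \emph{ratio} of sample complexities, and that ratio is $(1/\epsilon)^{\Theta(|\mathcal{S}|)}$ precisely because the only thing communication changes is the per-step progress probability from $\Theta(1)$ to $\Theta(\epsilon)$; if you insist on a polynomial baseline your construction proves a different (and false, for this construction) statement.

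Second, and relatedly, your step (3) leaves the suppression rate $\gamma$ disconnected from the exploration rate $\epsilon$ that appears in the lemma. Showing $\mu_{\texttt{min}}=O(\gamma^{|\mathcal{S}|})$ for some generic $\gamma<1$ yields a ratio of $\gamma^{-|\mathcal{S}|}$ relative to your baseline, not $(1/\epsilon)^{|\mathcal{S}|}$. To get the claimed bound you must arrange that, under the communicated behavior policy (which in this framework is $\epsilon$-greedy/\texttt{SoftMax} with respect to the sender's Q-values), each of the $\Theta(|\mathcal{S}|)$ required progress actions is taken with probability exactly $\Theta(\epsilon)$, while without communication it is taken with probability $\Theta(1)$ (at least in the first episode, before the receiver's own noisy odd-state rewards corrupt its greedy preferences). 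This is the crux of the paper's argument --- comparing the hitting probability $O\bigl(K(\epsilon/2)^{n/2}\bigr)$ with communication against $(1/2)^{n/2}+K(\epsilon/2)^{n/2}$ without --- and it is the piece your outline does not yet pin down. Your attention to verifying that $\nu_i=\arg\min_j\|z_j-z_i\|_2$ actually selects the adversarial sender, and your observation that one needs a genuine lower bound (the Q-estimate of an unvisited state--action pair cannot move off its initialization, so the error stays $\Omega(1)$ until the first visit), are both sound and are handled only implicitly in the paper.
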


The proofs of Lemma~\ref{lemma:r2u_useful} and~\ref{lemma:r2u_useless} can be found in Appendix C.3. 
The observation that the errors in the Q function and the sample complexity of Q learning can increase exponentially with the number of states in the worst case scenarios highlights the need to move away from predefined communication channels. Instead, communication Q-learning provides a more adaptable and effective approach.


\section{Experiments}
\begin{figure*}[t]
\includegraphics[width=\textwidth]{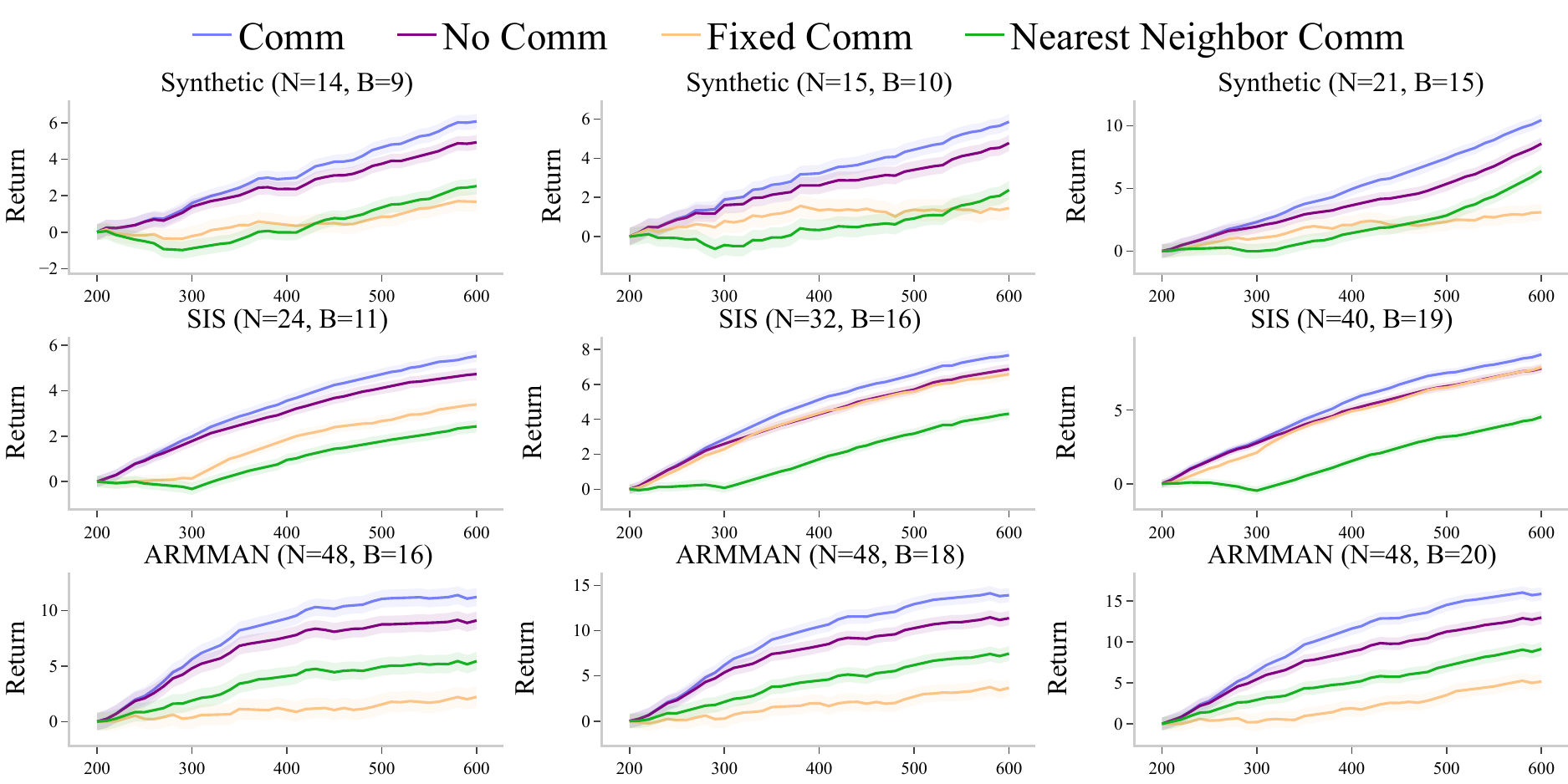}
        \caption{Performance (interquartile mean \cite{agarwal2021deep} and standard error of return over 200 random seeds) of our method, baselines, and ablations in three environments with different numbers of arms $N$ and resource budgets $B$. Communication learning starts at epoch 200. We present results on \textbf{two additional baseline} in Tables~\ref{table:noise_level_ablation} and \ref{table:num_noisy_arms_ablation} (see Appendix B for more ablations).}
        \label{fig:main_result_figure}
\end{figure*}

We present experimental evaluations and ablation studies to investigate: (1) The learning performance of our communication method compared to baselines; (2) The characteristics of learnt communication strategies; (3) The robustness of our method in terms of resource budgets, the number of arms, noise types and levels. We discuss additional experimental details and hyperparameters in Appendix B.


We evaluate our methods on the following domains. In all domains, 80\% of the arms are noisy, and noisy arms observe reward drawn from a perturbed reward function $\tilde{R}(s) = R(s)+\epsilon(s)$, where $\epsilon\sim\mathcal{N}(0,1)$. In ablation studies (Tables~\ref{table:num_noisy_arms_ablation} and 3), we test various numbers of noisy arms and noise type. 

\textbf{Synthetic}: We consider a synthetic RMAB with continuous states. For state $s_{i}\in[0, 1]$ and action $a\in\{0,1\}$ of arm $i$, the next state $s_i'$ is determined by: $s_{i}' =  clip\left(s_i + \mathcal{N}(\mu_{ia}, \sigma_{ia}), 0, 1\right)$. Here transition parameters are sampled uniformly, specifically $\mu_{i1} = -\mu_{i0} \in [-0.2, 0.2]$, $\sigma_{i1} = \sigma_{i0} \in [0.1, 0.2]$.

\textbf{SIS\ Epidemic\ Model}~\cite{yaesoubi2011generalized,killian2022restless, zhao2024towards}: Each arm $p$ represents a subpopulation in a geographic region, with the number of uninfected individuals being the state $s_p$. We implement a budget-constrained intervention framework with three actions: \(a_0\) represents no intervention, \(a_1\) requires weekly COVID testing, and \(a_2\) involves distributing face masks. Actions change the disease transmission parameters in a targeted manner, reflecting their real-world public-health impacts. The reward noise is due to variability in data collection, which may be caused by poor COVID testing equipment \cite{embrett2022barriers,schmitt2020covid} and influence of authorities or impacted populations (keeping the COVID numbers low) \cite{paulus2023reinforcing}. 

\textbf{ARMMAN}: The dataset is collected by ARMMAN~\citep{armman-mhealth}, an NGO in India aimed at improving health awareness for a million expectant and new mothers via automated  voice messaging programs. Each arm represents a cluster of mothers in a geographical location. Actions are sending a health worker to visit mothers in a location to improve engagement. States, continuous valued in [0,1], represent the average engagement of mothers in a location ~\citep{danassis2023limited}. The transition dynamics is established empirically from the data (data usage and consent are in Appendix A.2). 
Systematic reward errors are present due to difference in infrastructure such as cellular network and electricity \cite{kumar2022survey} (e.g. a geographic location may have poor cellular signals or frequent power outage, causing a systematic underestimation of mothers' engagement).  

\begin{table*}[htb]
\centering
\scalebox{.8}{
    \begin{tabular}{l@{}p{0.1mm}c ccc@{} c ccc@{} c ccc@{}}
    \toprule
    \multirow{1}{*}{} & \multirow{3}{*}{\ \ } & \multicolumn{3}{c}{\textbf{Synthetic (N=21, b=15)}} 
     & & \multicolumn{3}{c}{\textbf{SIS (N=32, B=16)}}
     & & \multicolumn{3}{c}{\textbf{ARMMAN (N=48, B=20)}}\\
    \cmidrule{3-5} \cmidrule{7-9} \cmidrule{11-13}
\textbf{Method}& & 100\% & 110\%  & 120\% & & 100\%  & 110\% & 120\% & & 100\%  & 110\% & 120\%\\
    \midrule
    \sname{No\ Noise} & & 100\%  & 100\%  & 100\% & & 100\%  & 100\%  & 100\% & & 100\% & 100\% & 100\%\\ 
    \sname{Comm} & & 75\%  & 53\%  & 38\% & & 63\%  & 55\%  & 51\% & & 77\% & 72\% & 68\%\\  
    \sname{No\ Comm} & & 62\%  & 39\%  & 24\% & & 55\%  & 45\%  & 42\% & & 63\% & 59\% & 52\%\\  
    \sname{Fixed/ Comm} & & 21\%  & 6\%  & 12\% & & 56\%  & 42\%  & 46\% & & 24\% & 15\% & 27\%\\  
    \sname{Nearest\ Neighbor\ Comm} & & 44\%  & 20\%  & 3\% & & 32\%  & 22\%  & 23\% & & 46\% & 43\% & 31\%\\   
    \sname{Random\ Comm} & & -24\%  & -45\%  & -58\% & & 22\%  & 13\%  & 9\% & & -22\% & -26\% & -35\%\\     
    \bottomrule
    \end{tabular}
}
\caption{Average returns (100 seeds) with increased noise, normalized with respect to \sname{No\ Noise} and \sname{Comm} (ours) before communication (see Remark 7 in Appendix B). We test Gaussian Noise $\mathcal{N}(0,\sigma)$. $x\%$ noise level means $\sigma=x/100$.}
\label{table:noise_level_ablation}
\end{table*}

\begin{table*}[htb]
\centering
\scalebox{.8}{
    \begin{tabular}{l@{}p{0.1mm}c ccc@{} c ccc@{} c ccc@{}}
    \toprule
    \multirow{1}{*}{} & \multirow{3}{*}{\ \ } & \multicolumn{3}{c}{\textbf{Synthetic (N=21, B=15)}} 
     & & \multicolumn{3}{c}{\textbf{SIS (N=32, B=16)}}
     & & \multicolumn{3}{c}{\textbf{ARMMAN (N=48, B=20)}}\\
    \cmidrule{3-5} \cmidrule{7-9} \cmidrule{11-13}
\textbf{Method}& & 15 & 16  & 17 & & 22  & 24 & 26 & & 34  & 36 & 38\\
    \midrule
    \sname{No\ Noise} & & 100\%  & 100\%  & 100\% & & 100\%  & 100\%  & 100\% & & 100\% & 100\% & 100\%\\ 
    \sname{Comm} & & 78\%  & 78\%  & 75\% & & 69\%  & 71\%  & 63\% & & 82\% & 80\% & 77\%\\  
    \sname{No\ Comm} & & 67\%  & 66\%  & 62\% & & 61\%  & 61\%  & 55\% & & 68\% & 59\% & 63\%\\  
    \sname{Fixed/ Comm} & & 63\%  & 45\%  & 21\% & & 65\%  & 62\%  & 56\% & & 43\% & 49\% & 24\%\\  
    \sname{Nearest\ Neighbor\ Comm} & & 42\%  & 47\%  & 44\% & & 11\%  & 29\%  & 32\% & & 47\% & 37\% & 46\%\\   
    \sname{Random\ Comm} & & -46\%  & -32\%  & -24\% & & -8\%  & 17\%  & 22\% & & -60\% & -30\% & -22\%\\   
   \bottomrule
    \end{tabular}
}
\caption{Performance (return averaged across 100 random seeds) with numbers of noisy arms, normalized with respect to \sname{No\ Noise} and \sname{Comm} (ours) before communication starts (see Remark 7 in Appendix B).}
\label{table:num_noisy_arms_ablation}
\end{table*}

\textbf{Learning Performance}.\ \ Figure~\ref{fig:main_result_figure} shows the experimental results in \sname{Synthetic}, \sname{SIS}, and \sname{ARMMAN} where we compare our method \sname{Comm\ (Ours)} against various baselines and ablations. \sname{No\ Comm} is an ablation where the RMAB with noisy arms is trained without communication. \sname{Fixed\ Comm} involves a preset, fixed communication pattern where at each timestep, each noisy arm receives information from a similar ($\ell_2$ distance of features), noise-free arm. \sname{Fixed\ Comm} uses oracle information about which arms are noisy, which is not known by other methods. \sname{Nearest\ Neighbor\ Comm} is a baseline strategy where an arm always receives communication from the nearest neighbor in feature space, regardless of whether the sender/receiver is a noisy arm. We include \textbf{two additional baselines} in Tables~\ref{table:noise_level_ablation} and \ref{table:num_noisy_arms_ablation}. 

Our method significantly outperforms \sname{No\ Comm}, \sname{Fixed\ Comm}, and \sname{Nearest\ neighbor\ Comm}. Notably, the \sname{Fixed\ Comm} strategy, despite the oracle information, often performs worse than \sname{No\ Comm}. This observation aligns with our theoretical analysis in Proposition~\ref{prop:utor} showing that communication channels from non-noisy to noisy arms are not always helpful. 
Our method consistently outperforms across various number of arms, budgets, and domains. For instance, in the Synthetic environment with N=21 and B=15, our method achieves a return of approximately 10 at epoch 600, compared to about 8 for the no communication (No Comm) baseline, representing an 20 percentage point improvement. Similarly, in the ARMMAN environment with N=48 and B=20, our method reaches a return of 15, outperforming the No Comm baseline's return of approximately 12.5, which is a 20 percentage point increase. 
These results demonstrate the effectiveness of our communication algorithm in handling systematic errors and enhancing learning in noisy RMAB environments.

\begin{figure}[h!]
\centering
    \includegraphics[width=0.45\textwidth]{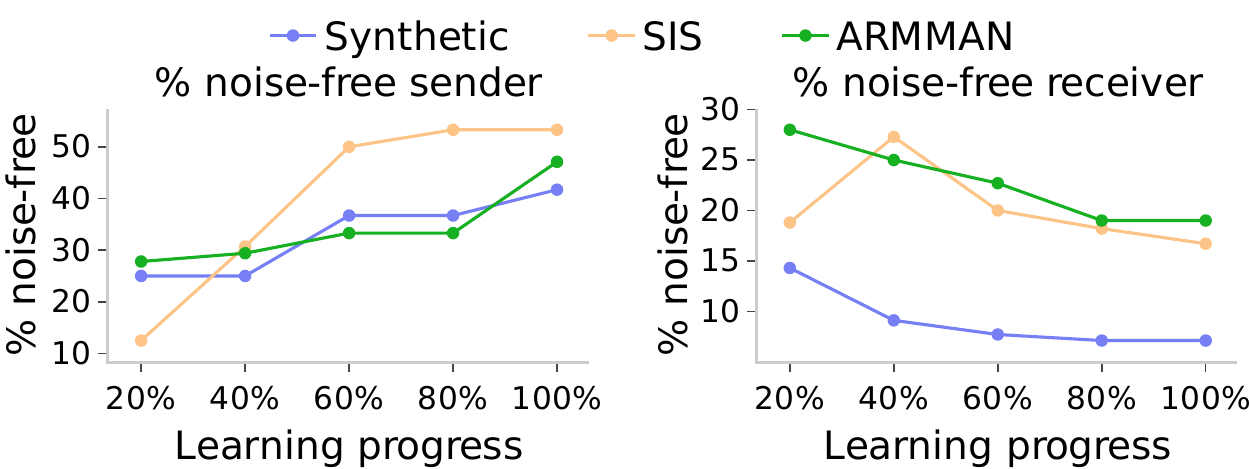}
    \caption{The change in the proportion of noise-free senders and receiver within all activated channels during the learning process. Results are for Synthetic (N=21, B=15), SIS (N=32, B=16), and ARMMAN (N=48,  B=16).}
    \label{fig:comm_pattern_figure}
\end{figure}

\textbf{Communication Pattern.} \ \ 
Figure~\ref{fig:comm_pattern_figure} illustrates how our learnt communication strategy evolves. We make several observations. First, as the number of learning epochs increase, we observe more information are sent from non-noisy arms to noisy arms. During the early stages of learning, when arms are primarily exploring, communication from noisy arms to non-noisy arms can be beneficial, as shown in Proposition~\ref{prop:utor}. As the learning progresses, the information from noisy sources may become less helpful as the arms have explored sufficiently. As a result, the proportion of communication from non-noisy arms increases. This observation is consistent across the three experimental environments. 

Second, notice the percentage of noise-free receivers do not drop to zero and noise-free senders are not at 100\% even at the end of training, indicating value of communication from noisy arms. Finally, the differing rates of noise-free arms as senders and receivers across domains indicate that our communication approach adapts to the domain.

\textbf{Robustness.}\ \ Tables~\ref{table:noise_level_ablation} and ~\ref{table:num_noisy_arms_ablation} show the performance of different methods under various settings (see Appendix B for more ablations). In \sname{No\ Noise}, all arms receive accurate, true reward signals without any noise. Arms learn individually on their own data without any communication. In \sname{Random\ Comm}, each arm receives communication from a randomly chosen arm. The results demonstrate that our method is robust under increasing noise levels, numbers of noisy arms, and different noise distributions. In contrast, the performance of baselines \sname{No\ Comm} and \sname{Fixed\ Comm} changes significantly when the environment changes. In particular, on Synthetic, when the number of noisy arms increases from 15 to 17 (Table~\ref{table:num_noisy_arms_ablation}), our \sname{Comm} method has \textbf{only 3\%} drop in performance, 
while \sname{Fixed\ Comm} has 42 percentage points drop in performance, despite that \sname{Fixed\ Comm} uses ground truth information about which arms are noisy that is not known to our \sname{Comm} method. Similar trends can be seen on SIS and ARMMAN. These results demonstrate that, compared to a fixed communication protocol with access to oracle information, our approach for communication learning 
provides better robustness. Additionally, the performance of \sname{No\ Comm} supports the argument that conventional RL methods struggle in noisy RMABs.


\section{Closing Remarks}

The study of RMABs in environments with noisy and unreliable data sources underscores the need for innovative solutions in RL methods. 
We introduce a novel communication learning algorithm to alleviate the influence of noisy data so that the central planner can make better coordination decisions. 
Theoretically, we prove that communication can help reduce errors in value function approximation if the behavioral policy's state-action occupancy probability is lower bounded by a suitable constant. Empirically, we show that our approach significantly improves RMAB 
performance. 


\section*{Acknowledgments}
The work is supported by Harvard HDSI funding

\bibliography{aaai25}

\begin{thebibliography}{123}
\providecommand{\natexlab}[1]{#1}

\bibitem[{Agarwal et~al.(2021)Agarwal, Schwarzer, Castro, Courville, and Bellemare}]{agarwal2021deep}
Agarwal, R.; Schwarzer, M.; Castro, P.~S.; Courville, A.~C.; and Bellemare, M. 2021.
\newblock Deep reinforcement learning at the edge of the statistical precipice.
\newblock \emph{Advances in neural information processing systems}, 34: 29304--29320.

\bibitem[{{ARMMAN}(2019)}]{armman-mhealth}
{ARMMAN}. 2019.
\newblock Assessing the Impact of Mobile-based Intervention on Health Literacy among Pregnant Women in Urban India.
\newblock \url{https://armman.org/wp-content/uploads/2019/09/Sion-Study-Abstract.pdf}.
\newblock Accessed: 2022-08-12.

\bibitem[{Avrachenkov and Borkar(2022)}]{avrachenkov2022whittle}
Avrachenkov, K.~E.; and Borkar, V.~S. 2022.
\newblock Whittle index based Q-learning for restless bandits with average reward.
\newblock \emph{Automatica}, 139: 110186.

\bibitem[{Bagheri and Scaglione(2015)}]{bagheri2015restless}
Bagheri, S.; and Scaglione, A. 2015.
\newblock The restless multi-armed bandit formulation of the cognitive compressive sensing problem.
\newblock \emph{IEEE Transactions on Signal Processing}, 63(5): 1183--1198.

\bibitem[{Behari et~al.(2024)Behari, Zhang, Zhao, Taneja, Nagaraj, and Tambe}]{behari2024decision}
Behari, N.; Zhang, E.; Zhao, Y.; Taneja, A.; Nagaraj, D.; and Tambe, M. 2024.
\newblock A Decision-Language Model (DLM) for Dynamic Restless Multi-Armed Bandit Tasks in Public Health.
\newblock \emph{arXiv preprint arXiv:2402.14807}.

\bibitem[{Biswas et~al.(2021)Biswas, Aggarwal, Varakantham, and Tambe}]{biswas2021learn}
Biswas, A.; Aggarwal, G.; Varakantham, P.; and Tambe, M. 2021.
\newblock Learn to intervene: An adaptive learning policy for restless bandits in application to preventive healthcare.
\newblock \emph{arXiv preprint arXiv:2105.07965}.

\bibitem[{Boehmer et~al.(2024)Boehmer, Zhao, Xiong, Rodriguez-Diaz, Cibrian, Ngonzi, Boatin, and Tambe}]{boehmer2024optimizing}
Boehmer, N.; Zhao, Y.; Xiong, G.; Rodriguez-Diaz, P.; Cibrian, P. D.~C.; Ngonzi, J.; Boatin, A.; and Tambe, M. 2024.
\newblock Optimizing vital sign monitoring in resource-constrained maternal care: An rl-based restless bandit approach.
\newblock \emph{arXiv preprint arXiv:2410.08377}.

\bibitem[{B{\"o}hmer, Kurin, and Whiteson(2020)}]{bohmer2020deep}
B{\"o}hmer, W.; Kurin, V.; and Whiteson, S. 2020.
\newblock Deep coordination graphs.
\newblock In \emph{International Conference on Machine Learning}, 980--991. PMLR.

\bibitem[{Bottou(2010)}]{bottou2010large}
Bottou, L. 2010.
\newblock Large-scale machine learning with stochastic gradient descent.
\newblock In \emph{Proceedings of COMPSTAT'2010: 19th International Conference on Computational StatisticsParis France, August 22-27, 2010 Keynote, Invited and Contributed Papers}, 177--186. Springer.

\bibitem[{Boutilier(1996)}]{boutilier1996planning}
Boutilier, C. 1996.
\newblock Planning, learning and coordination in multiagent decision processes.
\newblock In \emph{TARK}, volume~96, 195--210. Citeseer.

\bibitem[{Bukharin et~al.(2024)Bukharin, Li, Yu, Zhang, Chen, Zuo, Zhang, Zhang, and Zhao}]{bukharin2024robust}
Bukharin, A.; Li, Y.; Yu, Y.; Zhang, Q.; Chen, Z.; Zuo, S.; Zhang, C.; Zhang, S.; and Zhao, T. 2024.
\newblock Robust multi-agent reinforcement learning via adversarial regularization: Theoretical foundation and stable algorithms.
\newblock \emph{Advances in Neural Information Processing Systems}, 36.

\bibitem[{Chen and Jiang(2019)}]{chen2019information}
Chen, J.; and Jiang, N. 2019.
\newblock Information-theoretic considerations in batch reinforcement learning.
\newblock In \emph{International Conference on Machine Learning}, 1042--1051. PMLR.

\bibitem[{Choromanski et~al.(2023)Choromanski, Sehanobish, Lin, Zhao, Berger, Parshakova, Pan, Watkins, Zhang, Likhosherstov et~al.}]{choromanski2023efficient}
Choromanski, K.~M.; Sehanobish, A.; Lin, H.; Zhao, Y.; Berger, E.; Parshakova, T.; Pan, A.; Watkins, D.; Zhang, T.; Likhosherstov, V.; et~al. 2023.
\newblock Efficient Graph Field Integrators Meet Point Clouds.
\newblock In \emph{International Conference on Machine Learning}, 5978--6004. PMLR.

\bibitem[{Christianos, Sch{\"a}fer, and Albrecht(2020)}]{christianos2020shared}
Christianos, F.; Sch{\"a}fer, L.; and Albrecht, S. 2020.
\newblock Shared experience actor-critic for multi-agent reinforcement learning.
\newblock \emph{Advances in neural information processing systems}, 33: 10707--10717.

\bibitem[{Danassis et~al.(2023)Danassis, Verma, Killian, Taneja, and Tambe}]{danassis2023limited}
Danassis, P.; Verma, S.; Killian, J.~A.; Taneja, A.; and Tambe, M. 2023.
\newblock Limited resource allocation in a non-Markovian world: the case of maternal and child healthcare.
\newblock In \emph{Proceedings of the Thirty-Second International Joint Conference on Artificial Intelligence}, 5950--5958.

\bibitem[{Das et~al.(2017)Das, Kottur, Moura, Lee, and Batra}]{das2017learning}
Das, A.; Kottur, S.; Moura, J.~M.; Lee, S.; and Batra, D. 2017.
\newblock Learning cooperative visual dialog agents with deep reinforcement learning.
\newblock In \emph{Proceedings of the IEEE International Conference on Computer Vision}, 2951--2960.

\bibitem[{Dong et~al.(2021)Dong, Wang, Liu, Han, and Zhang}]{dong2021birds}
Dong, H.; Wang, T.; Liu, J.; Han, C.; and Zhang, C. 2021.
\newblock Birds of a feather flock together: A close look at cooperation emergence via multi-agent rl.
\newblock \emph{arXiv preprint arXiv:2104.11455}.

\bibitem[{Dong et~al.(2022)Dong, Wang, Liu, and Zhang}]{dong2022low}
Dong, H.; Wang, T.; Liu, J.; and Zhang, C. 2022.
\newblock Low-rank modular reinforcement learning via muscle synergy.
\newblock \emph{Advances in Neural Information Processing Systems}, 35: 19861--19873.

\bibitem[{Dong et~al.(2023)Dong, Zhang, Wang, and Zhang}]{dong2023symmetry}
Dong, H.; Zhang, J.; Wang, T.; and Zhang, C. 2023.
\newblock Symmetry-aware robot design with structured subgroups.
\newblock In \emph{International Conference on Machine Learning}, 8334--8355. PMLR.

\bibitem[{Dubrow(2022)}]{dubrow2022local}
Dubrow, J.~K. 2022.
\newblock Local data and upstream reporting as sources of error in the administrative data undercount of Covid 19.
\newblock \emph{International Journal of Social Research Methodology}, 25(4): 471--476.

\bibitem[{D{\"u}tting et~al.(2024)D{\"u}tting, Feng, Narasimhan, Parkes, and Ravindranath}]{dutting2024optimal}
D{\"u}tting, P.; Feng, Z.; Narasimhan, H.; Parkes, D.~C.; and Ravindranath, S.~S. 2024.
\newblock Optimal auctions through deep learning: Advances in differentiable economics.
\newblock \emph{Journal of the ACM}, 71(1): 1--53.

\bibitem[{Dwork, Roth et~al.(2014)}]{dwork2014algorithmic}
Dwork, C.; Roth, A.; et~al. 2014.
\newblock The algorithmic foundations of differential privacy.
\newblock \emph{Foundations and Trends{\textregistered} in Theoretical Computer Science}, 9(3--4): 211--407.

\bibitem[{Elmachtoub, Gupta, and Zhao(2023)}]{elmachtoub2023balanced}
Elmachtoub, A.; Gupta, V.; and Zhao, Y. 2023.
\newblock Balanced off-policy evaluation for personalized pricing.
\newblock In \emph{International Conference on Artificial Intelligence and Statistics}, 10901--10917. PMLR.

\bibitem[{Elmachtoub et~al.(2023)Elmachtoub, Lam, Zhang, and Zhao}]{elmachtoub2023estimate}
Elmachtoub, A.~N.; Lam, H.; Zhang, H.; and Zhao, Y. 2023.
\newblock Estimate-then-optimize versus integrated-estimationoptimization: A stochastic dominance perspective.
\newblock \emph{arXiv preprint arXiv:2304.06833}.

\bibitem[{Embrett et~al.(2022)Embrett, Sim, Caldwell, Boulos, Yu, Agarwal, Cooper, Aj, Bielska, Chishtie et~al.}]{embrett2022barriers}
Embrett, M.; Sim, S.~M.; Caldwell, H.~A.; Boulos, L.; Yu, Z.; Agarwal, G.; Cooper, R.; Aj, A. J.~G.; Bielska, I.~A.; Chishtie, J.; et~al. 2022.
\newblock Barriers to and strategies to address COVID-19 testing hesitancy: a rapid scoping review.
\newblock \emph{BMC public health}, 22(1): 750.

\bibitem[{Foerster et~al.(2016)Foerster, Assael, de~Freitas, and Whiteson}]{foerster2016learning}
Foerster, J.; Assael, I.~A.; de~Freitas, N.; and Whiteson, S. 2016.
\newblock Learning to communicate with deep multi-agent reinforcement learning.
\newblock In \emph{Advances in Neural Information Processing Systems}, 2137--2145.

\bibitem[{Fu et~al.(2019)Fu, Nazarathy, Moka, and Taylor}]{fu2019towards}
Fu, J.; Nazarathy, Y.; Moka, S.; and Taylor, P.~G. 2019.
\newblock Towards q-learning the whittle index for restless bandits.
\newblock In \emph{2019 Australian \& New Zealand Control Conference (ANZCC)}, 249--254. IEEE.

\bibitem[{Fulcher et~al.(2020)Fulcher, Hedt, Marealle, Tibaijuka, Abdalla, Hofmann, Layer, Mitchell, and Hedt-Gauthier}]{fulcher2020errors}
Fulcher, I.; Hedt, K.; Marealle, S.; Tibaijuka, J.; Abdalla, O.; Hofmann, R.; Layer, E.; Mitchell, M.; and Hedt-Gauthier, B. 2020.
\newblock Errors in estimated gestational ages reduce the likelihood of health facility deliveries: results from an observational cohort study in Zanzibar.
\newblock \emph{BMC Health Services Research}, 20: 1--10.

\bibitem[{Ghosh et~al.(2023)Ghosh, Nagaraj, Jain, and Tambe}]{ghosh2023indexability}
Ghosh, A.; Nagaraj, D.; Jain, M.; and Tambe, M. 2023.
\newblock Indexability is Not Enough for Whittle: Improved, Near-Optimal Algorithms for Restless Bandits.
\newblock In \emph{Proceedings of the 2023 International Conference on Autonomous Agents and Multiagent Systems}, 1294--1302.

\bibitem[{Guo et~al.(2022)Guo, Chen, Hao, Yin, Yu, and Li}]{guo2022towards}
Guo, J.; Chen, Y.; Hao, Y.; Yin, Z.; Yu, Y.; and Li, S. 2022.
\newblock Towards comprehensive testing on the robustness of cooperative multi-agent reinforcement learning.
\newblock In \emph{Proceedings of the IEEE/CVF Conference on Computer Vision and Pattern Recognition}, 115--122.

\bibitem[{Hartford, Wright, and Leyton-Brown(2016)}]{hartford2016deep}
Hartford, J.~S.; Wright, J.~R.; and Leyton-Brown, K. 2016.
\newblock Deep learning for predicting human strategic behavior.
\newblock \emph{Advances in neural information processing systems}, 29.

\bibitem[{Hawkins(2003)}]{hawkins2003langrangian}
Hawkins, J.~T. 2003.
\newblock \emph{A Langrangian decomposition approach to weakly coupled dynamic optimization problems and its applications}.
\newblock Ph.D. thesis, Massachusetts Institute of Technology.

\bibitem[{Hodge and Glazebrook(2015)}]{hodge2015asymptotic}
Hodge, D.~J.; and Glazebrook, K.~D. 2015.
\newblock On the asymptotic optimality of greedy index heuristics for multi-action restless bandits.
\newblock \emph{Advances in Applied Probability}, 47(3): 652--667.

\bibitem[{Hoshen(2017)}]{hoshen2017vain}
Hoshen, Y. 2017.
\newblock Vain: Attentional multi-agent predictive modeling.
\newblock In \emph{Advances in Neural Information Processing Systems}, 2701--2711.

\bibitem[{Hossain et~al.(2024)Hossain, Wang, Lin, Chen, Parkes, and Xu}]{wang2024multi}
Hossain, S.; Wang, T.; Lin, T.; Chen, Y.; Parkes, D.~C.; and Xu, H. 2024.
\newblock Multi-Sender Persuasion: A Computational Perspective.
\newblock In \emph{Forty-first International Conference on Machine Learning}.

\bibitem[{Hsu, Kontorovich, and Szepesv{\'a}ri(2015)}]{hsu2015mixing}
Hsu, D.~J.; Kontorovich, A.; and Szepesv{\'a}ri, C. 2015.
\newblock Mixing time estimation in reversible markov chains from a single sample path.
\newblock \emph{Advances in neural information processing systems}, 28.

\bibitem[{Huang, Mordatch, and Pathak(2020)}]{huang2020one}
Huang, W.; Mordatch, I.; and Pathak, D. 2020.
\newblock One policy to control them all: Shared modular policies for agent-agnostic control.
\newblock In \emph{International Conference on Machine Learning}, 4455--4464. PMLR.

\bibitem[{Jaques et~al.(2019)Jaques, Lazaridou, Hughes, Gulcehre, Ortega, Strouse, Leibo, and De~Freitas}]{jaques2019social}
Jaques, N.; Lazaridou, A.; Hughes, E.; Gulcehre, C.; Ortega, P.; Strouse, D.; Leibo, J.~Z.; and De~Freitas, N. 2019.
\newblock Social influence as intrinsic motivation for multi-agent deep reinforcement learning.
\newblock In \emph{International conference on machine learning}, 3040--3049. PMLR.

\bibitem[{Jiang et~al.(2019)Jiang, Dun, Huang, and Lu}]{jiang2019graph}
Jiang, J.; Dun, C.; Huang, T.; and Lu, Z. 2019.
\newblock Graph Convolutional Reinforcement Learning.
\newblock In \emph{International Conference on Learning Representations}.

\bibitem[{Jiang and Lu(2018)}]{jiang2018learning}
Jiang, J.; and Lu, Z. 2018.
\newblock Learning attentional communication for multi-agent cooperation.
\newblock In \emph{Advances in Neural Information Processing Systems}, 7254--7264.

\bibitem[{Jin et~al.(2021)Jin, Liu, Wang, and Yu}]{jin2021v}
Jin, C.; Liu, Q.; Wang, Y.; and Yu, T. 2021.
\newblock V-Learning--A Simple, Efficient, Decentralized Algorithm for Multiagent RL.
\newblock \emph{arXiv preprint arXiv:2110.14555}.

\bibitem[{Kang, Wang, and de~Melo(2020)}]{kang2020incorporating}
Kang, Y.; Wang, T.; and de~Melo, G. 2020.
\newblock Incorporating pragmatic reasoning communication into emergent language.
\newblock \emph{Advances in neural information processing systems}, 33: 10348--10359.

\bibitem[{Kang et~al.(2022)Kang, Wang, Yang, Wu, and Zhang}]{kang2022non}
Kang, Y.; Wang, T.; Yang, Q.; Wu, X.; and Zhang, C. 2022.
\newblock Non-linear coordination graphs.
\newblock \emph{Advances in Neural Information Processing Systems}, 35: 25655--25666.

\bibitem[{Killian et~al.(2021)Killian, Biswas, Shah, and Tambe}]{killian2021q}
Killian, J.~A.; Biswas, A.; Shah, S.; and Tambe, M. 2021.
\newblock Q-Learning Lagrange Policies for Multi-Action Restless Bandits.
\newblock In \emph{Proceedings of the 27th ACM SIGKDD Conference on Knowledge Discovery \& Data Mining}, KDD '21, 871–881. New York, NY, USA: Association for Computing Machinery.
\newblock ISBN 9781450383325.

\bibitem[{Killian, Perrault, and Tambe(2021)}]{killian2021beyond}
Killian, J.~A.; Perrault, A.; and Tambe, M. 2021.
\newblock Beyond "To Act or Not to Act": Fast Lagrangian Approaches to General Multi-Action Restless Bandits.
\newblock In \emph{AAMAS}, 710--718. UK: AAMAS.

\bibitem[{Killian et~al.(2022)Killian, Xu, Biswas, and Tambe}]{killian2022restless}
Killian, J.~A.; Xu, L.; Biswas, A.; and Tambe, M. 2022.
\newblock Restless and uncertain: Robust policies for restless bandits via deep multi-agent reinforcement learning.
\newblock In \emph{Uncertainty in Artificial Intelligence}, 990--1000. PMLR.

\bibitem[{Kuba et~al.(2021)Kuba, Chen, Wen, Wen, Sun, Wang, and Yang}]{kuba2021trust}
Kuba, J.~G.; Chen, R.; Wen, M.; Wen, Y.; Sun, F.; Wang, J.; and Yang, Y. 2021.
\newblock Trust Region Policy Optimisation in Multi-Agent Reinforcement Learning.
\newblock In \emph{International Conference on Learning Representations}.

\bibitem[{Kumar et~al.(2022)Kumar, Ihita, Chaudhari, and Arumugam}]{kumar2022survey}
Kumar, S. K.~A.; Ihita, G.~V.; Chaudhari, S.; and Arumugam, P. 2022.
\newblock A survey on rural internet connectivity in India.
\newblock In \emph{2022 14th International Conference on COMmunication Systems \& NETworkS (COMSNETS)}, 911--916. IEEE.

\bibitem[{Kuo et~al.(2020)Kuo, Ostuni, Horishny, Curry, Dooley, Chiang, Goldstein, and Dickerson}]{kuo2020proportionnet}
Kuo, K.; Ostuni, A.; Horishny, E.; Curry, M.~J.; Dooley, S.; Chiang, P.-y.; Goldstein, T.; and Dickerson, J.~P. 2020.
\newblock Proportionnet: Balancing fairness and revenue for auction design with deep learning.
\newblock \emph{arXiv preprint arXiv:2010.06398}.

\bibitem[{Kurin et~al.(2020)Kurin, Igl, Rockt{\"a}schel, Boehmer, and Whiteson}]{kurin2020my}
Kurin, V.; Igl, M.; Rockt{\"a}schel, T.; Boehmer, W.; and Whiteson, S. 2020.
\newblock My Body is a Cage: the Role of Morphology in Graph-Based Incompatible Control.
\newblock In \emph{International Conference on Learning Representations}.

\bibitem[{Kwak et~al.(2010)Kwak, Yang, Yin, Taylor, and Tambe}]{kwak2010teamwork}
Kwak, J.-y.; Yang, R.; Yin, Z.; Taylor, M.~E.; and Tambe, M. 2010.
\newblock Teamwork and coordination under model uncertainty in DEC-POMDPs.
\newblock In \emph{Workshops at the Twenty-Fourth AAAI Conference on Artificial Intelligence}.

\bibitem[{Lazaridou, Peysakhovich, and Baroni(2017)}]{lazaridou2017multi}
Lazaridou, A.; Peysakhovich, A.; and Baroni, M. 2017.
\newblock Multi-agent cooperation and the emergence of (natural) language.
\newblock In \emph{Proceedings of the International Conference on Learning Representations (ICLR)}.

\bibitem[{Lee, Lavieri, and Volk(2019)}]{lee2019optimal}
Lee, E.; Lavieri, M.~S.; and Volk, M. 2019.
\newblock Optimal screening for hepatocellular carcinoma: A restless bandit model.
\newblock \emph{Manufacturing \& Service Operations Management}, 21(1): 198--212.

\bibitem[{Li et~al.(2021)Li, Wang, Wu, Zhao, Yang, and Zhang}]{li2021celebrating}
Li, C.; Wang, T.; Wu, C.; Zhao, Q.; Yang, J.; and Zhang, C. 2021.
\newblock Celebrating diversity in shared multi-agent reinforcement learning.
\newblock \emph{Advances in Neural Information Processing Systems}, 34: 3991--4002.

\bibitem[{Li et~al.(2020)Li, Wei, Chi, Gu, and Chen}]{li2020sample}
Li, G.; Wei, Y.; Chi, Y.; Gu, Y.; and Chen, Y. 2020.
\newblock Sample complexity of asynchronous Q-learning: Sharper analysis and variance reduction.
\newblock \emph{Advances in neural information processing systems}, 33: 7031--7043.

\bibitem[{Li et~al.(2019)Li, Wu, Cui, Dong, Fang, and Russell}]{li2019robust}
Li, S.; Wu, Y.; Cui, X.; Dong, H.; Fang, F.; and Russell, S. 2019.
\newblock Robust multi-agent reinforcement learning via minimax deep deterministic policy gradient.
\newblock In \emph{Proceedings of the AAAI conference on artificial intelligence}, volume~33, 4213--4220.

\bibitem[{Littman(1994)}]{littman1994markov}
Littman, M.~L. 1994.
\newblock Markov games as a framework for multi-agent reinforcement learning.
\newblock In \emph{Machine learning proceedings 1994}, 157--163. Elsevier.

\bibitem[{Liu and Lai(2024)}]{liu2024efficient}
Liu, G.; and Lai, L. 2024.
\newblock Efficient adversarial attacks on online multi-agent reinforcement learning.
\newblock \emph{Advances in Neural Information Processing Systems}, 36.

\bibitem[{Mate et~al.(2022)Mate, Madaan, Taneja, Madhiwalla, Verma, Singh, Hegde, Varakantham, and Tambe}]{mate2022field}
Mate, A.; Madaan, L.; Taneja, A.; Madhiwalla, N.; Verma, S.; Singh, G.; Hegde, A.; Varakantham, P.; and Tambe, M. 2022.
\newblock Field Study in Deploying Restless Multi-Armed Bandits: Assisting Non-profits in Improving Maternal and Child Health.
\newblock \emph{Proceedings of the AAAI Conference on Artificial Intelligence}, 36: 12017--12025.

\bibitem[{Modi, Mary, and Moy(2019)}]{modi2019transfer}
Modi, N.; Mary, P.; and Moy, C. 2019.
\newblock Transfer restless multi-armed bandit policy for energy-efficient heterogeneous cellular network.
\newblock \emph{EURASIP Journal on Advances in Signal Processing}, 2019(1): 1--19.

\bibitem[{Mordatch and Abbeel(2018)}]{mordatch2018emergence}
Mordatch, I.; and Abbeel, P. 2018.
\newblock Emergence of grounded compositional language in multi-agent populations.
\newblock In \emph{Thirty-Second AAAI Conference on Artificial Intelligence}.

\bibitem[{Munos(2003)}]{munos2003error}
Munos, R. 2003.
\newblock Error bounds for approximate policy iteration.
\newblock In \emph{Proceedings of the Twentieth International Conference on International Conference on Machine Learning}, 560--567.

\bibitem[{Nagaraj et~al.(2023)Nagaraj, Kowshik, Agarwal, Netrapalli, and Jain}]{nagaraj2023multi}
Nagaraj, D.~M.; Kowshik, S.~S.; Agarwal, N.; Netrapalli, P.; and Jain, P. 2023.
\newblock Multi-user reinforcement learning with low rank rewards.
\newblock In \emph{International Conference on Machine Learning}, 25627--25659. PMLR.

\bibitem[{Nair et~al.(2004)Nair, Tambe, Roth, and Yokoo}]{nair2004communications}
Nair, R.; Tambe, M.; Roth, M.; and Yokoo, M. 2004.
\newblock Communications for improving policy computation in distributed POMDPs.
\newblock In \emph{Proceedings of the Third International Joint Conference on Autonomous Agents and Multiagent Systems, 2004. AAMAS 2004.}, 1098--1105. IEEE.

\bibitem[{Nakhleh et~al.(2021)Nakhleh, Ganji, Hsieh, Hou, Shakkottai et~al.}]{nakhleh2021neurwin}
Nakhleh, K.; Ganji, S.; Hsieh, P.-C.; Hou, I.; Shakkottai, S.; et~al. 2021.
\newblock NeurWIN: Neural Whittle index network for restless bandits via deep RL.
\newblock \emph{Advances in Neural Information Processing Systems}, 34: 828--839.

\bibitem[{Oliehoek, Amato et~al.(2016)}]{oliehoek2016concise}
Oliehoek, F.~A.; Amato, C.; et~al. 2016.
\newblock \emph{A concise introduction to decentralized POMDPs}, volume~1.
\newblock Springer.

\bibitem[{Papadimitriou and Tsitsiklis(1999)}]{papadimitriou1994complexity}
Papadimitriou, C.~H.; and Tsitsiklis, J.~N. 1999.
\newblock The Complexity of Optimal Queuing Network Control.
\newblock \emph{Mathematics of Operations Research}, 24(2): 293--305.

\bibitem[{Paulus et~al.(2023)Paulus, de~Vries, Janssen, and Van~de Walle}]{paulus2023reinforcing}
Paulus, D.; de~Vries, G.; Janssen, M.; and Van~de Walle, B. 2023.
\newblock Reinforcing data bias in crisis information management: The case of the Yemen humanitarian response.
\newblock \emph{International Journal of Information Management}, 72: 102663.

\bibitem[{Peng et~al.(2021)Peng, Rashid, Schroeder~de Witt, Kamienny, Torr, B{\"o}hmer, and Whiteson}]{peng2021facmac}
Peng, B.; Rashid, T.; Schroeder~de Witt, C.; Kamienny, P.-A.; Torr, P.; B{\"o}hmer, W.; and Whiteson, S. 2021.
\newblock Facmac: Factored multi-agent centralised policy gradients.
\newblock \emph{Advances in Neural Information Processing Systems}, 34: 12208--12221.

\bibitem[{Peri et~al.(2021)Peri, Curry, Dooley, and Dickerson}]{peri2021preferencenet}
Peri, N.; Curry, M.; Dooley, S.; and Dickerson, J. 2021.
\newblock Preferencenet: Encoding human preferences in auction design with deep learning.
\newblock \emph{Advances in Neural Information Processing Systems}, 34: 17532--17542.

\bibitem[{Polyak and Juditsky(1992)}]{polyak1992acceleration}
Polyak, B.~T.; and Juditsky, A.~B. 1992.
\newblock Acceleration of stochastic approximation by averaging.
\newblock \emph{SIAM journal on control and optimization}, 30(4): 838--855.

\bibitem[{Qin et~al.(2022)Qin, Chen, Wang, Yuan, Wu, Kang, Zhang, Zhang, and Yu}]{qin2022multi}
Qin, R.-J.; Chen, F.; Wang, T.; Yuan, L.; Wu, X.; Kang, Y.; Zhang, Z.; Zhang, C.; and Yu, Y. 2022.
\newblock Multi-Agent Policy Transfer via Task Relationship Modeling.
\newblock In \emph{Deep Reinforcement Learning Workshop NeurIPS 2022}.

\bibitem[{Rahme et~al.(2021)Rahme, Jelassi, Bruna, and Weinberg}]{rahme2021permutation}
Rahme, J.; Jelassi, S.; Bruna, J.; and Weinberg, S.~M. 2021.
\newblock A Permutation-Equivariant Neural Network Architecture For Auction Design.
\newblock In \emph{Thirty-Fifth {AAAI} Conference on Artificial Intelligence, {AAAI} 2021, Thirty-Third Conference on Innovative Applications of Artificial Intelligence, {IAAI} 2021, The Eleventh Symposium on Educational Advances in Artificial Intelligence, {EAAI} 2021, Virtual Event, February 2-9, 2021}, 5664--5672. {AAAI} Press.

\bibitem[{Rashid et~al.(2018)Rashid, Samvelyan, Witt, Farquhar, Foerster, and Whiteson}]{rashid2018qmix}
Rashid, T.; Samvelyan, M.; Witt, C.~S.; Farquhar, G.; Foerster, J.; and Whiteson, S. 2018.
\newblock QMIX: Monotonic Value Function Factorisation for Deep Multi-Agent Reinforcement Learning.
\newblock In \emph{International Conference on Machine Learning}, 4292--4301.

\bibitem[{Ravindranath et~al.(2021)Ravindranath, Feng, Li, Ma, Kominers, and Parkes}]{ravindranath2021deep}
Ravindranath, S.~S.; Feng, Z.; Li, S.; Ma, J.; Kominers, S.~D.; and Parkes, D.~C. 2021.
\newblock Deep learning for two-sided matching.
\newblock \emph{arXiv preprint arXiv:2107.03427}.

\bibitem[{Ravindranath, Jiang, and Parkes(2023)}]{ravindranath2023data}
Ravindranath, S.~S.; Jiang, Y.; and Parkes, D.~C. 2023.
\newblock Data Market Design through Deep Learning.
\newblock In Oh, A.; Naumann, T.; Globerson, A.; Saenko, K.; Hardt, M.; and Levine, S., eds., \emph{Advances in Neural Information Processing Systems}, volume~36, 6662--6689. Curran Associates, Inc.

\bibitem[{Robbins and Monro(1951)}]{robbins1951stochastic}
Robbins, H.; and Monro, S. 1951.
\newblock A stochastic approximation method.
\newblock \emph{The annals of mathematical statistics}, 400--407.

\bibitem[{Ruiz-Hern{\'a}ndez, Pinar-P{\'e}rez, and Delgado-G{\'o}mez(2020)}]{ruiz2020multi}
Ruiz-Hern{\'a}ndez, D.; Pinar-P{\'e}rez, J.~M.; and Delgado-G{\'o}mez, D. 2020.
\newblock Multi-machine preventive maintenance scheduling with imperfect interventions: A restless bandit approach.
\newblock \emph{Computers \& Operations Research}, 119: 104927.

\bibitem[{Schmitt-Groh{\'e}, Teoh, and Uribe(2020)}]{schmitt2020covid}
Schmitt-Groh{\'e}, S.; Teoh, K.; and Uribe, M. 2020.
\newblock COVID-19: testing inequality in New York City.
\newblock Technical report, National Bureau of Economic Research.

\bibitem[{Sehanobish et~al.(2023)Sehanobish, Choromanski, Zhao, Dubey, and Likhosherstov}]{sehanobish2023scalable}
Sehanobish, A.; Choromanski, K.; Zhao, Y.; Dubey, A.; and Likhosherstov, V. 2023.
\newblock Scalable neural network kernels.
\newblock \emph{arXiv preprint arXiv:2310.13225}.

\bibitem[{Seow et~al.(2024)Seow, Zhao, Wood, Tambe, and Gonzalez}]{seow2024improving}
Seow, R.; Zhao, Y.; Wood, D.; Tambe, M.; and Gonzalez, C. 2024.
\newblock Improving the prediction of individual engagement in recommendations using cognitive models.
\newblock \emph{arXiv preprint arXiv:2408.16147}.

\bibitem[{Shapley(1953)}]{shapley1953stochastic}
Shapley, L.~S. 1953.
\newblock Stochastic games.
\newblock \emph{Proceedings of the national academy of sciences}, 39(10): 1095--1100.

\bibitem[{Shi et~al.(2024)Shi, Mazumdar, Chi, and Wierman}]{shi2024sample}
Shi, L.; Mazumdar, E.; Chi, Y.; and Wierman, A. 2024.
\newblock Sample-Efficient Robust Multi-Agent Reinforcement Learning in the Face of Environmental Uncertainty.
\newblock \emph{arXiv preprint arXiv:2404.18909}.

\bibitem[{Singh, Jain, and Sukhbaatar(2019)}]{singh2019learning}
Singh, A.; Jain, T.; and Sukhbaatar, S. 2019.
\newblock Learning when to Communicate at Scale in Multiagent Cooperative and Competitive Tasks.
\newblock In \emph{Proceedings of the International Conference on Learning Representations (ICLR)}.

\bibitem[{Son et~al.(2019)Son, Kim, Kang, Hostallero, and Yi}]{son2019qtran}
Son, K.; Kim, D.; Kang, W.~J.; Hostallero, D.~E.; and Yi, Y. 2019.
\newblock Qtran: Learning to factorize with transformation for cooperative multi-agent reinforcement learning.
\newblock In \emph{International conference on machine learning}, 5887--5896. PMLR.

\bibitem[{Song, Wang, and Zhang(2019)}]{song2019convergence}
Song, X.; Wang, T.; and Zhang, C. 2019.
\newblock Convergence of Multi-Agent Learning with a Finite Step Size in General-Sum Games.
\newblock In \emph{Proceedings of the 18th International Conference on Autonomous Agents and MultiAgent Systems}, 935--943.

\bibitem[{Sukhbaatar, Fergus et~al.(2016)}]{sukhbaatar2016learning}
Sukhbaatar, S.; Fergus, R.; et~al. 2016.
\newblock Learning multiagent communication with backpropagation.
\newblock \emph{Advances in neural information processing systems}, 29.

\bibitem[{Sunehag et~al.(2018)Sunehag, Lever, Gruslys, Czarnecki, Zambaldi, Jaderberg, Lanctot, Sonnerat, Leibo, Tuyls et~al.}]{sunehag2018value}
Sunehag, P.; Lever, G.; Gruslys, A.; Czarnecki, W.~M.; Zambaldi, V.; Jaderberg, M.; Lanctot, M.; Sonnerat, N.; Leibo, J.~Z.; Tuyls, K.; et~al. 2018.
\newblock Value-decomposition networks for cooperative multi-agent learning based on team reward.
\newblock In \emph{Proceedings of the 17th International Conference on Autonomous Agents and MultiAgent Systems}, 2085--2087. International Foundation for Autonomous Agents and Multiagent Systems.

\bibitem[{Sutton and Barto(2018)}]{sutton2018reinforcement}
Sutton, R.~S.; and Barto, A.~G. 2018.
\newblock \emph{Reinforcement learning: An introduction}.
\newblock MIT press.

\bibitem[{Tacchetti et~al.(2019)Tacchetti, Strouse, Garnelo, Graepel, and Bachrach}]{tacchetti2019neural}
Tacchetti, A.; Strouse, D.; Garnelo, M.; Graepel, T.; and Bachrach, Y. 2019.
\newblock A neural architecture for designing truthful and efficient auctions.
\newblock \emph{arXiv preprint arXiv:1907.05181}.

\bibitem[{Taylor et~al.(2019)Taylor, Dusparic, Gu{\'e}riau, and Clarke}]{taylor2019parallel}
Taylor, A.; Dusparic, I.; Gu{\'e}riau, M.; and Clarke, S. 2019.
\newblock Parallel transfer learning in multi-agent systems: What, when and how to transfer?
\newblock In \emph{2019 International Joint Conference on Neural Networks (IJCNN)}, 1--8. IEEE.

\bibitem[{Tripathi and Modiano(2019)}]{tripathi2019whittle}
Tripathi, V.; and Modiano, E. 2019.
\newblock A Whittle Index Approach to Minimizing Functions of Age of Information.
\newblock In \emph{2019 57th Annual Allerton Conference on Communication, Control, and Computing (Allerton)}, 1160--1167. Allerton: Allerton.

\bibitem[{Verloop(2016)}]{verloop2016asymptotically}
Verloop, I.~M. 2016.
\newblock Asymptotically optimal priority policies for indexable and nonindexable restless bandits.
\newblock \emph{The Annals of Applied Probability}, 26(4): 1947--1995.

\bibitem[{Verma et~al.(2024)Verma, Zhao, Shah, Boehmer, Taneja, and Tambe}]{vermagroup}
Verma, S.; Zhao, Y.; Shah, S.; Boehmer, N.; Taneja, A.; and Tambe, M. 2024.
\newblock Group Fairness in Predict-Then-Optimize Settings for Restless Bandits.

\bibitem[{Wang et~al.(2023)Wang, Verma, Mate, Shah, Taneja, Madhiwalla, Hegde, and Tambe}]{wang2023scalable}
Wang, K.; Verma, S.; Mate, A.; Shah, S.; Taneja, A.; Madhiwalla, N.; Hegde, A.; and Tambe, M. 2023.
\newblock Scalable decision-focused learning in restless multi-armed bandits with application to maternal and child health.
\newblock In \emph{Proceedings of the AAAI Conference on Artificial Intelligence}, volume~37, 12138--12146.

\bibitem[{Wang et~al.(2020{\natexlab{a}})Wang, Dong, Lesser, and Zhang}]{wang2020roma}
Wang, T.; Dong, H.; Lesser, V.; and Zhang, C. 2020{\natexlab{a}}.
\newblock ROMA: Multi-Agent Reinforcement Learning with Emergent Roles.
\newblock In \emph{Proceedings of the 37th International Conference on Machine Learning}.

\bibitem[{Wang et~al.(2024)Wang, Duetting, Ivanov, Talgam-Cohen, and Parkes}]{wang2024deep}
Wang, T.; Duetting, P.; Ivanov, D.; Talgam-Cohen, I.; and Parkes, D.~C. 2024.
\newblock Deep Contract Design via Discontinuous Networks.
\newblock \emph{Advances in Neural Information Processing Systems}, 36.

\bibitem[{Wang et~al.(2020{\natexlab{b}})Wang, Gupta, Mahajan, Peng, Whiteson, and Zhang}]{wang2020rode}
Wang, T.; Gupta, T.; Mahajan, A.; Peng, B.; Whiteson, S.; and Zhang, C. 2020{\natexlab{b}}.
\newblock RODE: Learning Roles to Decompose Multi-Agent Tasks.
\newblock In \emph{International Conference on Learning Representations}.

\bibitem[{Wang, Jiang, and Parkes(2024)}]{wang2024gemnet}
Wang, T.; Jiang, Y.; and Parkes, D.~C. 2024.
\newblock GemNet: Menu-Based, strategy-proof multi-bidder auctions through deep learning.
\newblock In \emph{Proceedings of the 25th ACM Conference on Economics and Computation}.

\bibitem[{Wang et~al.(2019{\natexlab{a}})Wang, Wang, Wu, and Zhang}]{wang2019influence}
Wang, T.; Wang, J.; Wu, Y.; and Zhang, C. 2019{\natexlab{a}}.
\newblock Influence-Based Multi-Agent Exploration.
\newblock In \emph{International Conference on Learning Representations}.

\bibitem[{Wang et~al.(2019{\natexlab{b}})Wang, Wang, Zheng, and Zhang}]{wang2019learning}
Wang, T.; Wang, J.; Zheng, C.; and Zhang, C. 2019{\natexlab{b}}.
\newblock Learning Nearly Decomposable Value Functions Via Communication Minimization.
\newblock In \emph{International Conference on Learning Representations}.

\bibitem[{Wang et~al.(2021)Wang, Zeng, Dong, Yang, Yu, and Zhang}]{wang2021context}
Wang, T.; Zeng, L.; Dong, W.; Yang, Q.; Yu, Y.; and Zhang, C. 2021.
\newblock Context-aware sparse deep coordination graphs.
\newblock \emph{arXiv preprint arXiv:2106.02886}.

\bibitem[{Wang et~al.(2019{\natexlab{c}})Wang, Dong, Chen, and Wang}]{wang2019q}
Wang, Y.; Dong, K.; Chen, X.; and Wang, L. 2019{\natexlab{c}}.
\newblock Q-learning with UCB Exploration is Sample Efficient for Infinite-Horizon MDP.
\newblock In \emph{International Conference on Learning Representations}.

\bibitem[{Wang et~al.(2020{\natexlab{c}})Wang, Han, Wang, Dong, and Zhang}]{wang2020dop}
Wang, Y.; Han, B.; Wang, T.; Dong, H.; and Zhang, C. 2020{\natexlab{c}}.
\newblock DOP: Off-policy multi-agent decomposed policy gradients.
\newblock In \emph{International conference on learning representations}.

\bibitem[{Weber and Weiss(1990)}]{weber1990index}
Weber, R.~R.; and Weiss, G. 1990.
\newblock On an index policy for restless bandits.
\newblock \emph{Journal of applied probability}, 27(3): 637--648.

\bibitem[{Wen et~al.(2022)Wen, Kuba, Lin, Zhang, Wen, Wang, and Yang}]{wen2022multi}
Wen, M.; Kuba, J.; Lin, R.; Zhang, W.; Wen, Y.; Wang, J.; and Yang, Y. 2022.
\newblock Multi-agent reinforcement learning is a sequence modeling problem.
\newblock \emph{Advances in Neural Information Processing Systems}, 35: 16509--16521.

\bibitem[{Whittle(1988)}]{whittle1988restless}
Whittle, P. 1988.
\newblock Restless bandits: Activity allocation in a changing world.
\newblock \emph{Journal of applied probability}, 25(A): 287--298.

\bibitem[{Wu et~al.(2022)Wu, Wang, Wu, Zhang, Hu, Fan, and Zhang}]{wu2022model}
Wu, S.; Wang, T.; Wu, X.; Zhang, J.; Hu, Y.; Fan, C.; and Zhang, C. 2022.
\newblock Model and Method: Training-Time Attack for Cooperative Multi-Agent Reinforcement Learning.
\newblock In \emph{Deep Reinforcement Learning Workshop NeurIPS 2022}.

\bibitem[{Wu et~al.(2023)Wu, McMahan, Zhu, and Xie}]{wu2023reward}
Wu, Y.; McMahan, J.; Zhu, X.; and Xie, Q. 2023.
\newblock Reward poisoning attacks on offline multi-agent reinforcement learning.
\newblock In \emph{Proceedings of the aaai conference on artificial intelligence}, volume~37, 10426--10434.

\bibitem[{Xie et~al.(2020)Xie, Chen, Wang, and Yang}]{xie2020learning}
Xie, Q.; Chen, Y.; Wang, Z.; and Yang, Z. 2020.
\newblock Learning zero-sum simultaneous-move markov games using function approximation and correlated equilibrium.
\newblock In \emph{Conference on learning theory}, 3674--3682. PMLR.

\bibitem[{Xiong and Li(2023)}]{xiong2023finite}
Xiong, G.; and Li, J. 2023.
\newblock Finite-Time Analysis of Whittle Index based Q-Learning for Restless Multi-Armed Bandits with Neural Network Function Approximation.
\newblock In \emph{Thirty-seventh Conference on Neural Information Processing Systems}.

\bibitem[{Yaesoubi and Cohen(2011)}]{yaesoubi2011generalized}
Yaesoubi, R.; and Cohen, T. 2011.
\newblock Generalized Markov models of infectious disease spread: A novel framework for developing dynamic health policies.
\newblock \emph{European journal of operational research}, 215(3): 679--687.

\bibitem[{Yang et~al.(2022)Yang, Dong, Ren, Wang, Wang, and Zhang}]{yang2022self}
Yang, Q.; Dong, W.; Ren, Z.; Wang, J.; Wang, T.; and Zhang, C. 2022.
\newblock Self-organized polynomial-time coordination graphs.
\newblock In \emph{International Conference on Machine Learning}, 24963--24979. PMLR.

\bibitem[{Yu et~al.(2022)Yu, Velu, Vinitsky, Gao, Wang, Bayen, and Wu}]{yu2022surprising}
Yu, C.; Velu, A.; Vinitsky, E.; Gao, J.; Wang, Y.; Bayen, A.; and Wu, Y. 2022.
\newblock The surprising effectiveness of ppo in cooperative multi-agent games.
\newblock \emph{Advances in Neural Information Processing Systems}, 35: 24611--24624.

\bibitem[{Zayas-Caban, Jasin, and Wang(2019)}]{zayas2019asymptotically}
Zayas-Caban, G.; Jasin, S.; and Wang, G. 2019.
\newblock An asymptotically optimal heuristic for general nonstationary finite-horizon restless multi-armed, multi-action bandits.
\newblock \emph{Advances in Applied Probability}, 51(3): 745--772.

\bibitem[{Zhang et~al.(2024)Zhang, Zhao, Wang, Hossain, Gasztowtt, Zheng, Parkes, Tambe, and Chen}]{zhang2024position}
Zhang, E.; Zhao, S.; Wang, T.; Hossain, S.; Gasztowtt, H.; Zheng, S.; Parkes, D.~C.; Tambe, M.; and Chen, Y. 2024.
\newblock Position: Social Environment Design Should be Further Developed for AI-based Policy-Making.
\newblock In \emph{Forty-first International Conference on Machine Learning}.

\bibitem[{Zhang et~al.(2020)Zhang, Sun, Tao, Genc, Mallya, and Basar}]{zhang2020robust}
Zhang, K.; Sun, T.; Tao, Y.; Genc, S.; Mallya, S.; and Basar, T. 2020.
\newblock Robust multi-agent reinforcement learning with model uncertainty.
\newblock \emph{Advances in neural information processing systems}, 33: 10571--10583.

\bibitem[{Zhao, Krishnamachari, and Liu(2008)}]{zhao2007myopic}
Zhao, Q.; Krishnamachari, B.; and Liu, K. 2008.
\newblock On myopic sensing for multi-channel opportunistic access: structure, optimality, and performance.
\newblock \emph{IEEE Transactions on Wireless Communications}, 7(12): 5431--5440.

\bibitem[{Zhao(2023)}]{zhao2023integrating}
Zhao, Y. 2023.
\newblock \emph{Integrating Machine Learning and Optimization for Problems in Contextual Decision-Making and Dynamic Learning}.
\newblock Columbia University.

\bibitem[{Zhao et~al.(2023)Zhao, Behari, Hughes, Zhang, Nagaraj, Tuyls, Taneja, and Tambe}]{zhao2023towards}
Zhao, Y.; Behari, N.; Hughes, E.; Zhang, E.; Nagaraj, D.; Tuyls, K.; Taneja, A.; and Tambe, M. 2023.
\newblock Towards Zero Shot Learning in Restless Multi-armed Bandits.
\newblock \emph{AAMAS}.

\bibitem[{Zhao et~al.(2024{\natexlab{a}})Zhao, Behari, Hughes, Zhang, Nagaraj, Tuyls, Taneja, and Tambe}]{zhao2024towards}
Zhao, Y.; Behari, N.; Hughes, E.; Zhang, E.; Nagaraj, D.; Tuyls, K.; Taneja, A.; and Tambe, M. 2024{\natexlab{a}}.
\newblock Towards a Pretrained Model for Restless Bandits via Multi-arm Generalization.
\newblock \emph{IJCAI}.

\bibitem[{Zhao et~al.(2024{\natexlab{b}})Zhao, Boehmer, Taneja, and Tambe}]{zhao2024ai4sg}
Zhao, Y.; Boehmer, N.; Taneja, A.; and Tambe, M. 2024{\natexlab{b}}.
\newblock Towards Foundation-model-based Multiagent System to Accelerate AI for Social Impact.
\newblock \emph{arXiv preprint arXiv:2412.07880}.

\bibitem[{Zhao et~al.(2022)Zhao, Pan, Choromanski, Jain, and Sindhwani}]{zhao2022implicit}
Zhao, Y.; Pan, Q.; Choromanski, K.; Jain, D.; and Sindhwani, V. 2022.
\newblock Implicit Two-Tower Policies.
\newblock \emph{arXiv preprint arXiv:2208.01191}.

\end{thebibliography}

\appendix


\section{Ethics and Real-world ARMMAN Dataset}
\label{sec:appendix_armman_results}

We test our method in a purely simulated environment; any potential consideration of real-world deployment would require comprehensive field testing. All work in simulation was conducted in collaboration with ARMMAN's team of ethics, and any future consideration of deployment would require a real-world evaluation together with the domain partner, which may reveal further challenges to be addressed. 


\subsection{Secondary Analysis}
Our experiment falls into the category of secondary analysis of the data shared by ARMMAN. 
This paper does not involve the deployment of the proposed algorithm or any other baselines to the service call program. As noted earlier, the experiments are secondary analysis with approval from the ARMMAN ethics board.

\subsection{Consent and Data Usage}
\label{sec:appendix_consent_data_usage}
Consent is obtained from every beneficiary enrolling in the NGO's mobile health program. The data collected through the program is owned by the NGO and only the NGO is allowed to
share data. In our experiments, we use anonymized call listenership logs to calculate empirical transition probabilities. No personally
identifiable information (PII) is available to us.
The data exchange and usage were regulated by clearly
defined exchange protocols including anonymization, read-access
only to researchers, restricted use of the data for research purposes
only, and approval by ARMMAN’s ethics review committee.


\section{Additional Experiment Details and Results}\label{appx:exp_details}
\begin{table*}[htb!]
\centering
\scalebox{.99}{
    \begin{tabular}{l@{}p{0.1mm}c ccc@{} c ccc@{} c ccc@{}}
    \toprule
    \multirow{1}{*}{} & \multirow{3}{*}{\ \ } & \multicolumn{3}{c}{\textbf{Synthetic (N=21, B=15)}} 
     & & \multicolumn{3}{c}{\textbf{SIS (N=32, B=16)}}
     & & \multicolumn{3}{c}{\textbf{ARMMAN (N=48, B=20)}}\\
    \cmidrule{3-5} \cmidrule{7-9} \cmidrule{11-13}
\textbf{Method}& & \text{Gaussian} & \text{Uniform}  & \text{Mixture} & & \text{Gaussian}  & \text{Uniform} & \text{Mixture} & & \text{Gaussian}  & \text{Uniform} & \text{Mixture}\\
    \midrule
    \sname{No\ Noise} & & 100\%  & 100\%  & 100\% & & 100\%  & 100\%  & 100\% & & 100\% & 100\% & 100\%\\ 
    \sname{Comm} & & 75\%  & 97\%  & 79\% & & 63\%  & 73\%  & 56\% & & 77\% & 100\% & 80\%\\  
    \sname{No\ Comm} & & 62\%  & 92\%  & 71\% & & 55\%  & 66\%  & 47\% & & 63\% & 93\% & 75\%\\  
    \sname{Fixed/ Comm} & & 21\%  & 15\%  & 16\% & & 56\%  & 29\%  & 27\% & & 24\% & 37\% & 35\%\\  
    \sname{NN\ Comm} & & 44\%  & 89\%  & 62\% & & 32\%  & 48\%  & 20\% & & 46\% & 89\% & 72\%\\   
    \sname{Random\ Comm} & & -24\%  & 21\%  & 1\% & & 22\%  & 24\%  & -1\% & & -22\% & 35\% & 28\%\\   
  \bottomrule
    \end{tabular}
}
\caption{Performance (return averaged across 100 random seeds) with various noise distributions, normalized with respect to \sname{No\ Noise} and \sname{Comm} (ours) before communication starts. Gaussian noise is $\mathcal{N}(0,1)$, Uniform noise is $U(-0.5,0.5)$, and Mixture noise has 50\% chance of being the Gaussian and 50\% chance of being the Uniform. We abbreviate 
\sname{Nearest\ Neighbor\ Comm} as \sname{NN\ Comm}.}
\label{table:noise_shape_ablation}
\end{table*}

Our experimental settings include a synthetic setting, an epidemic modeling scenario, and a maternal healthcare intervention task, each designed to test the efficacy of our model under different conditions. In all experimental settings considered, 
to generate features, we randomly generate projection matrices and then project parameters that describe the ground truth transition
dynamics (unknown to the our learning algorithms) into features. The features have the same dimension as the number of parameters that describes the transition dynamics. In Table~\ref{table:noise_shape_ablation}, we present results on noise drawn from different distributions, including  Gaussian distribution, Uniform distribution, and Mixture distribution. 

In Table~\ref{table:hyperparameter}, we present \textbf{hyperparameters} used. We use categorical DQN to learn per-arm Q-values. The Q-network has tanh activation for all hidden layers and identity activation for output layers. 

\begin{table}[htb!]
    \centering
\scalebox{.99}{
    \begin{tabular}{@{}lc@{}}
        \toprule
        hyperparameter & value \\ 
        \midrule
        Number of atoms in Categorical DQN & 51\\
        Q-network learning rate & 5e-04\\
        Number of steps per epoch & 20\\
        DQN initial exploration epsilon & 0.3 \\
        DQN exploration epsilon decay rate & 0.999 \\
        Number of hidden layers in Q-network & 2 \\
        Number of neurons per hidden layer & 16 \\
        Target-Q updating frequency in epochs & 10 \\
        DQN buffer size in samples per arm & 12000 \\
        VDN-network learning rate & 5e-3\\
        \bottomrule
    \end{tabular}
    }
    \caption{Hyperparameter values.}
    \label{table:hyperparameter}
\end{table}

Our method is robust to both biased and unbiased reward errors. In Synthetic, with biased noise $\mathcal{N}(z,1)$ across varying $z$, our method consistently outperforms baselines, as shown in Table~\ref{table:biased_noise_ablation}.

\begin{table*}[htb!]
\centering
\scalebox{.99}{
    \begin{tabular}{l@{}p{0.1mm}c ccc@{}}
    \toprule
    \multirow{1}{*}{} & \multirow{3}{*}{\ \ } & \multicolumn{3}{c}{\textbf{Synthetic (N=21, B=15)}} \\
    \cmidrule{3-5} 
\textbf{Method}& & $\mathcal{N}(0,1)$ & $\mathcal{N}(0.1,1)$  & $\mathcal{N}(0.2,1)$ \\
    \midrule
    \sname{Comm} & & 75\%  & 64\%  & 56\% \\  
    \sname{No\ Comm} & & 62\%  & 50\%  & 43\% \\  
    \sname{Fixed/ Comm} & & 21\%  & 29\%  & 29\% \\  
    \sname{NN\ Comm} & & 44\%  & 33\%  & 24\% \\   
  \bottomrule
    \end{tabular}
}
\caption{Performance (return averaged across 100 random seeds) with biased noise $\mathcal{N}(z,1)$ across varying $z$, normalized with respect to \sname{No\ Noise} and \sname{Comm} (ours) before communication starts. We abbreviate 
\sname{Nearest\ Neighbor\ Comm} as \sname{NN\ Comm} .}
\label{table:biased_noise_ablation}
\end{table*}

\begin{remark}\label{remark:normalization}
Notice that in Tables 1, 2, 3, and 5, we normalize the returns of all algorithms under consideration by
$$
\frac{\text{return} - \text{return(\sname{Comm} at epoch 200)}}{\text{return(\sname{No\ Noise} at epoch 200}) - \text{return(\sname{Comm} at epoch 200})}.
$$
Communication learning starts at epoch 200.
\end{remark}


\section{Proofs}\label{appx:proof}

\subsection{Proposition~\ref{prop:sparse_com}}\label{appx:proof:sparse_com}
\sparsecom*

\begin{proof}
Suppose we consider dense communication, where the arm $i$ communicates with $d$ similar arms $\senderfor{i,1},\dots,\senderfor{i,d}$. In this case, the arm collects data from each of the policies $\mathcal{D}_{\pi_{\senderfor{i,1}}},\dots, \mathcal{D}_{\pi_{\senderfor{i,d}}}$. We call this mixture distribution $\bar{\mathcal{D}}_{i}$. Now, consider the TD-loss for this dense communication: 
\begin{align}
&\mathcal{L}^\texttt{c,dense}_\texttt{TD}(\theta_i|\coma{i}\shorte 1) = \mathbb{E}_{(s_i, a_i, s_i')\sim\bar{\mathcal{D}}_{i}}\Big[\Big(R(s_i)-\lambda c_{a_i} \nonumber\\
&+ \beta \max_a Q_i(s_i',a;\theta_i)-Q_i(s_i,a_i;\tilde{\theta}_i)\Big)^2\Big].
\end{align}
Now, suppose that we pick a random variable $J$ uniformly at random from $\{1,\dots,d\}$ and collect data only from the arm $\nu_{i,J}$. Then, we define:
\begin{align}
&\mathcal{L}^\texttt{c,sparse}_\texttt{TD}(\theta_i|\coma{i}\shorte 1) = \mathbb{E}_{(s_i, a_i, s_i')\sim\mathcal{D}_{\senderfor{i,J}}}\Big[\Big(R(s_i)-\lambda c_{a_i}\nonumber\\
&+ \beta \max_a Q_i(s_i',a;\theta_i)-Q_i(s_i,a_i;\tilde{\theta}_i)\Big)^2\Big].
\end{align}
Then, we have:
\begin{align}
    \mathcal{L}^\texttt{c,dense}_\texttt{TD}(\theta_i|\coma{i}\shorte 1) &= \mathbb{E}[ \mathcal{L}^\texttt{c,sparse}_\texttt{TD}(\theta_i|\coma{i}\shorte 1)] \nonumber \\
 \nabla\mathcal{L}^\texttt{c,dense}_\texttt{TD}(\theta_i|\coma{i}\shorte 1) &= \mathbb{E}[\nabla  \mathcal{L}^\texttt{c,sparse}_\texttt{TD}(\theta_i|\coma{i}\shorte 1)] \label{eq:stoc_approx}
\end{align}

Now consider the gradient descent (GD) in Equation~\eqref{equ:coms_t} with $\mathcal{L}^\texttt{c,dense}_\texttt{TD}(\theta_i|\coma{i}\shorte 1)$. When this loss is replaced with $\mathbb{E}[ \mathcal{L}^\texttt{c,sparse}_\texttt{TD}(\theta_i|\coma{i}\shorte 1)]$, it becomes stochastic gradient descent (SGD) for $\mathcal{L}^\texttt{c,dense}_\texttt{TD}(\theta_i|\coma{i}\shorte 1)$. SGD is widely deployed to optimize loss functions in deep learning since GD is much more expensive \cite{bottou2010large}. It can be shown that for a large class of loss functions, SGD converges to the same optimal solution as GD \cite{robbins1951stochastic,polyak1992acceleration}. Notice that even in our case, we reduce the computational complexity per-step by a factor of $O(d)$. 

We now analyze the total sample complexity. \cite{chen2019information} proves that, if we have $n$ training samples, with probability at least $1-\delta$, the value function after $k$ iterations satisfies $v^*-v^k\le \epsilon_eV_{max}$, and
\begin{align}
    n=O\left(\frac{C\log\frac{\left|\mathcal{F}\right|}{\delta}}{\epsilon_e^2(1-\beta)^4}\right).
\end{align}
Here, $\mathcal{F}$ is the class of candidate value function. When comparing the sample complexity under $\mu_{\nu_i}$ and $\mu_{\bar{\nu}_i}$, we have the same function class, so the sample complxity depends on the concentratability coefficient. We have
\begin{align}
    \frac{v(s,a)}{\mu_{\nu_i}(s,a)} \ge \frac{v(s,a)}{d\mu_{\bar{\nu}_i}(s,a)}, \ \ \forall (s,a)
\end{align}
for all admissible $v$.

Suppose that
\begin{align}
    C_{\bar{\nu}_i} = \frac{v_b(s_b,a_b)}{\mu_{\bar{\nu}_i}(s_b,a_b)},
\end{align}
we have
\begin{align}
    \frac{1}{d}C_{\bar{\nu}_i} &= \frac{1}{d}\frac{v_b(s_b,a_b)}{\mu_{\bar{\nu}_i}(s_b,a_b)}\\
    &\le \frac{v_b(s_b,a_b)}{\mu_{\nu_i}(s_b,a_b)}\\
    &\le C_{\nu_i}.
\end{align}
Therefore, the sample complexity of the sparse communication scheme is at most $d$ times greater than that of the dense communication scheme, and these two schemes can both learn $\epsilon_e$-optimal value functions.
\end{proof}

\subsection{Proposition~\ref{prop:utor}}\label{appx:proof:useful_com}
\usefulcom*

Let's first consider the case where a noise-free arm $i$ learns without communication, i.e., the arm conducts independent Q-learning. The best known sample-complexity result about model-free Q-learning is provided by~\cite{wang2019q}. Using a UCB exploration strategy, it can be proved that for any $1/2<\beta<1$, $\epsilon>0$, $\delta>0$, with a probability $1-\delta$, the number of timestep $t$ such that the policy is not $\epsilon$-optimal at state $s_t$ is at most
\begin{align}   t_{\texttt{ind}}=\tilde{\mathcal{O}}\left(\frac{|\mathcal{S}||\mathcal{A}|\text{ln}1/\delta}{\epsilon^2(1-\beta)^7}\right).\label{equ:ind_sample_complexity}
\end{align}
Here, $\tilde{\mathcal{O}}$ suppresses logarithmic factors of $1/\epsilon$, $1/(1-\beta)$, and $|\mathcal{S}||\mathcal{A}|$. 

Eq.~\ref{equ:ind_sample_complexity} gives the number of samples needed to get a good estimation of $Q_i$ in the worst case scenario. We now compare it to the case with communication.

Without communication, arm $i$ uses a behavior policy $\pi_{\nu_i}$ to collect samples for the Q update. This falls in the range of asynchronous Q-learning. We know that~\cite{li2020sample} has proven that, for any $0<\delta<1$ and $0<\epsilon <1/(1-\delta)$, with a probability of at least $1-\delta$, the number of timesteps where the Q value is not $\epsilon$-optimal is at most
\begin{align}
t_{\texttt{comm}}=\tilde{\mathcal{O}}\left(\text{ln}\frac{1}{\delta}\frac{1}{\mu_{\texttt{min}}}\left(\frac{1}{\epsilon^2(1-\beta)^5}+\frac{t_\texttt{mix}}{1-\beta}\right)\right).\label{equ:comm_sample_complexity}
\end{align}
Again, $\tilde{\mathcal{O}}$ suppresses logarithmic factors of $1/\epsilon$, $1/(1-\beta)$, and $|\mathcal{S}||\mathcal{A}|$.

\subsection{Noise-free to noisy communication channels}\label{appx:proof:examples}
\ruuseful*
\begin{proof}
\begin{figure}
    \centering
    \begin{subfigure}{\linewidth}
    \centering
    \includegraphics[width=0.8\linewidth]{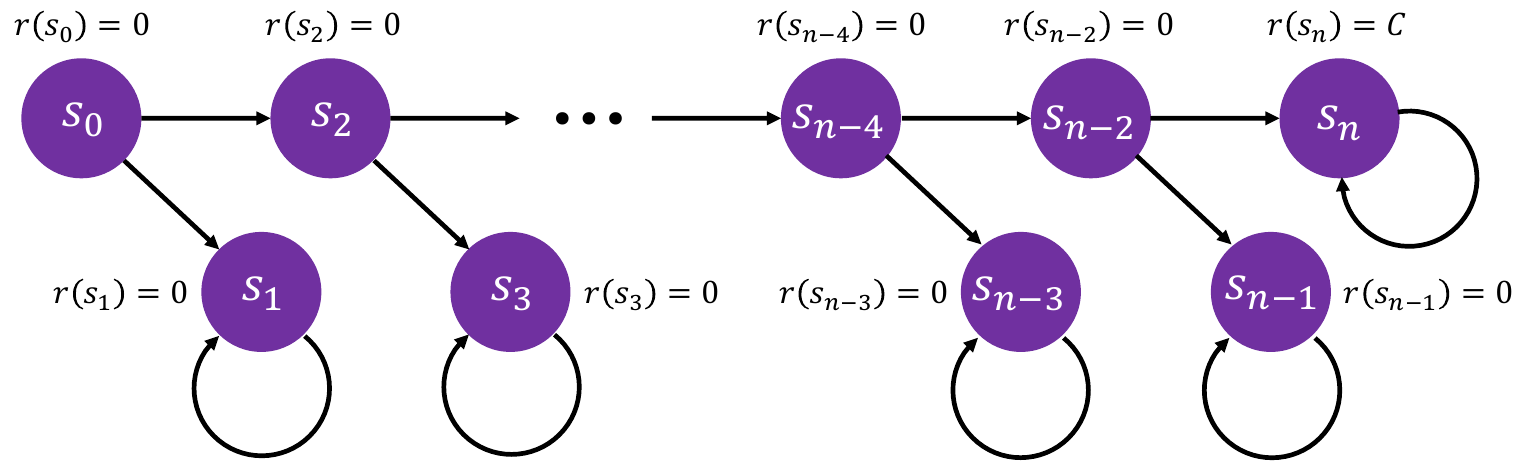}
    \caption{The MDP associated with a noise-free arm.}
    \end{subfigure}
    \begin{subfigure}{\linewidth}
    \centering
    \includegraphics[width=0.8\linewidth]{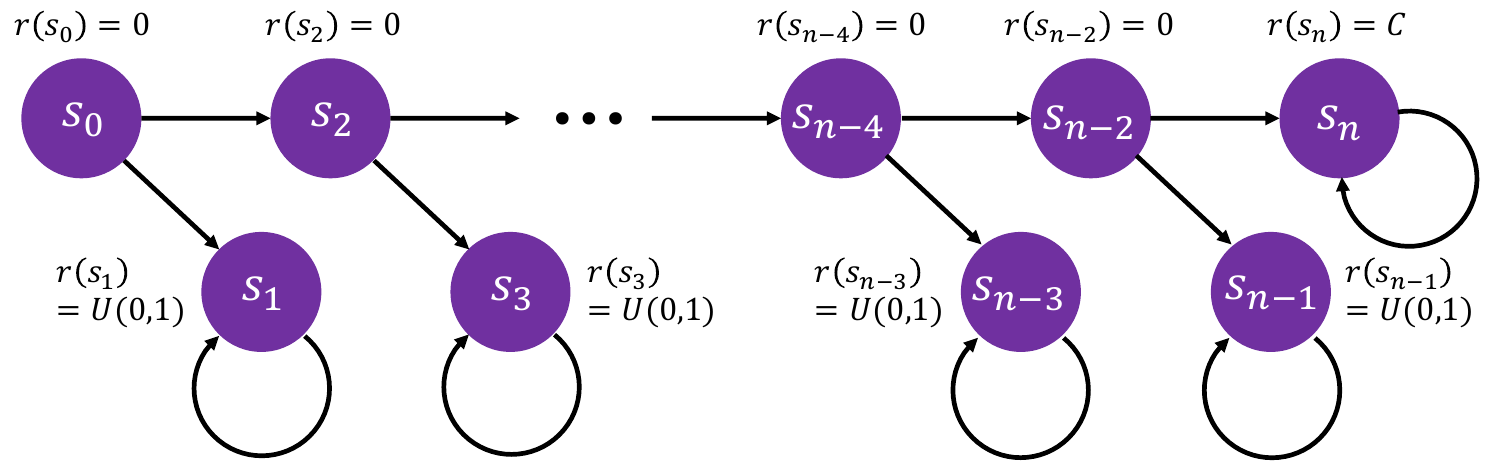}
    \caption{The MDP associated with an noisy arm.}
    \end{subfigure}
    \caption{The example MDP family where communication from a noise-free arm to an noisy arm is helpful.}
    \label{fig:r2u_helpful}
\end{figure}
For any number of $n>0$ and $\beta>0$, consider the following MDP (see also Fig.~\ref{fig:r2u_helpful}). The state set is $\mathcal{S}=\{s_0, s_1, \cdots, s_n\}$, and the action set is $\{a_0,a_1\}$. Taking $a_0$ and $a_1$ at $s_k$, where $k$ is an even number, deterministically lead to state $s_{k+2}$ and $s_{k+1}$, respectively. We assume that $s_{n+2}=s_n$. For state $s_k$ where $k$ is an odd number, both actions leads back to the state itself. In the noise-free arm, the reward is 0 for all states except for state $s_n$, where $R(s_n)=C$. In the noisy arm, all states $s_k$ with an odd $k$ are affected by noise, which is a uniform random number in the range (0,1).

We consider Q-learning with the $\epsilon$-greedy exploration strategy. All Q values are initialized to 0. In cases where the Q values of two actions at a state are the same, we break the tie randomly. We observe that, for a sufficiently large reward at $s_n$ where $R(s_n)=C>\left(\frac{1}{\beta}\right)^{n/2-2}$, the Q-function before visiting $s_n$ cannot be optimal. This is because, in the noisy arm, the best return of visiting an odd-number state is $\beta/(1-\beta)$, while the return of visiting $s_n$ is $C\beta^{n/2-1}/(1-\beta)$.

We explore the environment and update Q functions in $K=O(|\mathcal{S}|)$ epochs. Use $A_k$ to denote the event that $s_n$ is visited in the $k^{\text{th}}$ trajectory, and use $\bar{A}_{1:k-1}$ to denote the event that $s_n$ has not been visited in the first $k-1$ trajectories. We calculate the probability that $s_n$ is not explored in $K$ epochs, denoted by $Pr(\bar{A}_{1:K})$.

In the noisy arm, we have that
\begin{align}\label{equ:noise_not_visited_prob}
    Pr(\bar{A}_{1:K}) = Pr(\bar{A}_{1:1})Pr(\bar{A}_{1:2}|\bar{A}_{1:1})\cdots Pr(\bar{A}_{1:K}|\bar{A}_{1:K-1}).
\end{align}
For the first term, 
\begin{align*}
    Pr(\bar{A}_{1:1}) = 1-P(A_1) = 1-\left(\frac{1}{2}\right)^\frac{n}{2},
\end{align*}
because the only possible case where $A_1$ happens is if $a_0$ is selected at each state.

For the following terms, when $s_n$ is not visited in preceding epochs, there is at least one other state $s_x$, where $x$ is an odd number, that would have been visited. In this case, we have $Q(s_{x-1}, a_1)>Q(s_{x-1}, a_0)$ because $R(s_{x})$ is strictly positive. Therefore, the probability that state $s_{x+1}$ could be visited from $x-1$ decreases to $\epsilon/2$. It follows that
\begin{align*}
    P(\bar{A}_{1:k}|\bar{A}_{1:k-1}) = 1-\left(\frac{1}{2}\right)^{\frac{n}{2}-n_k}\frac{\epsilon}{2}^{n_k} = 1-\epsilon^{n_k}\left(\frac{1}{2}\right)^\frac{n}{2}.
\end{align*}
Here, $n_k$ is the number of odd-number states that have been visited in the first $k-1$ epochs. Since $1\ge n_k\le \frac{n}{2}$, we have that 
\begin{align}\label{equ:noise_not_visited_prob_k}
    1-\epsilon\left(\frac{1}{2}\right)^\frac{n}{2} < P(\bar{A}_{1:k}|\bar{A}_{1:k-1}) < 1-\left(\frac{\epsilon}{2}\right)^\frac{n}{2}
\end{align}
Combining Eq.~\ref{equ:noise_not_visited_prob} and Eq.~\ref{equ:noise_not_visited_prob_k}, we get
\begin{align}\label{equ:noise_not_visited_prob_bound}
& Pr(\bar{A}_{1:K}) \in \Bigg(\left(1-\left(\frac{1}{2}\right)^\frac{n}{2}\right)\left(1-\epsilon\left(\frac{1}{2}\right)^\frac{n}{2}\right)^{K-1}, \nonumber\\
&\left(1-\left(\frac{1}{2}\right)^\frac{n}{2}\right)\left(1-\left(\frac{\epsilon}{2}\right)^\frac{n}{2}\right)^{K-1}\Bigg)
\end{align}
for a noisy arm. Let's look at the upper bound. If we want the probability of finding $s_n$ to be sufficiently large, e.g., larger than $1-a$ where $a$ is a constant, then we need $O\left(\left(\frac{1}{\epsilon}\right)^\frac{n}{2}\right)$ samples. Here we use the approximation $(1-x)^r\approx 1-rx$ for small positive number $x$ and large $r$. This suggests that using an exponentially large number of samples may still result in an exponentially large error in Q estimation, as the estimation of $Q(s_n,a_0)$ is 0 if $s_n$ is not visited, while the true value is $\frac{C}{1-\beta}\ge \left(\frac{1}{\beta}\right)^{n/2-2}\frac{1}{1-\beta}=O\left(\left(\frac{1}{\beta}\right)^{|\mathcal{S}|}\right)$.

In the non-noisy arm, even if an odd number state is visited, the corresponding Q values along the trajectory do not change. Therefore, given that $s_n$ is not explored in the first $k-1$ epochs, the probability of $s_n$ being explored in epoch $k$ is $\frac{1}{2}^\frac{n}{2}$. It follows that, for the noise-free arm, the probability of $s_n$ not being visited in $K$ trajectories is:
\begin{align}
    Pr(\bar{A}_{1:K}) = \left(1-\left(\frac{1}{2}\right)^\frac{n}{2}\right)^K,
\end{align}
which is smaller than the probability in the noisy arm (Eq.~\ref{equ:noise_not_visited_prob_bound}). Therefore, communication of Q value from the non-noisy arm (after visiting $s_n$) can help exploration in the noisy arm, thus accelerating learning in this RMAB.





\end{proof}

\ruuseless*

\begin{proof}
\begin{figure}
    \centering
    \begin{subfigure}{\linewidth}
    \centering
    \includegraphics[width=0.8\linewidth]{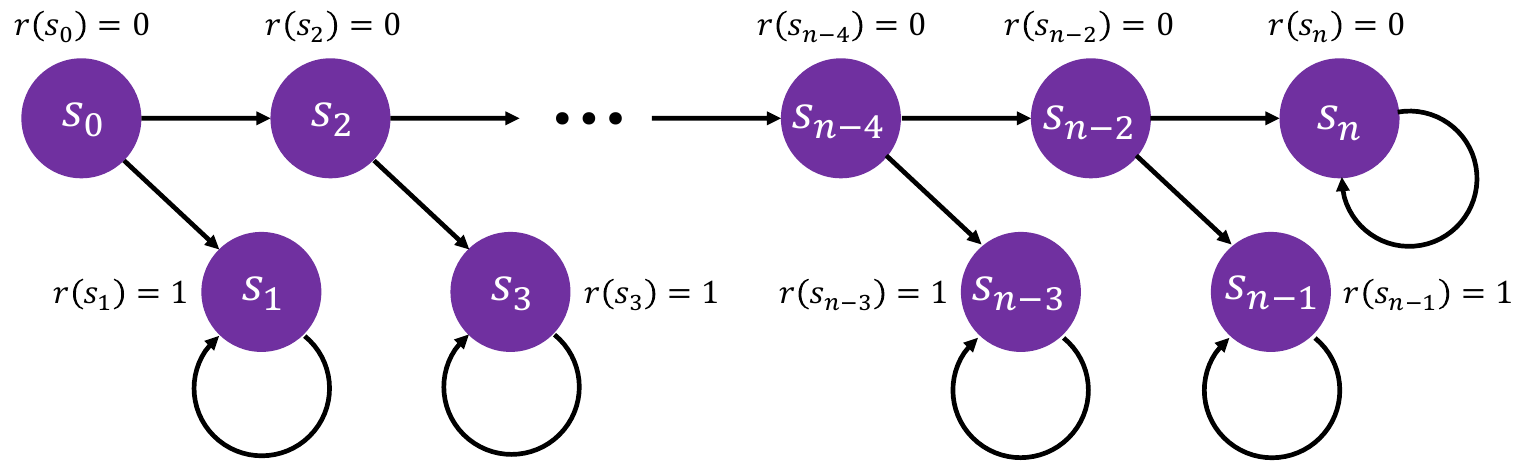}
    \caption{The MDP associated with a noise-free arm.}
    \end{subfigure}
    \begin{subfigure}{\linewidth}
    \centering
    \includegraphics[width=0.8\linewidth]{figs/exp1_noise.pdf}
    \caption{The MDP associated with an noisy arm.}
    \end{subfigure}
    \caption{The example RMAB family where communication from a noise-free arm to an noisy arm is useless.}
    \label{fig:r2u_useless}
\end{figure}
We consider a RMAB setting similar to Lemma~\ref{lemma:r2u_useful}. The only difference is that for all the odd-number states, the reward in the noise-free arm is 1, and the reward at $s_n$ is 0.

In the noisy arm, Eq.~\ref{equ:noise_not_visited_prob_bound} still holds, and the probability of $s_n$ being visited in $K$ trajectories is larger than
\begin{align}
    \left(\frac{1}{2}\right)^\frac{n}{2}+K\left(\frac{\epsilon}{2}\right)^\frac{n}{2}
\end{align}
Here we use the approximation $(1-x)^r\approx 1-rx$ for small positive number $x$ and large $r$.

If we use the Q function of the non-noisy arm to establish the behavior policy, for every $s_k$, where $k$ is an even number smaller than $n$, we have $Q(s_k,a_1)>Q(s_k,a_0)$. Therefore, the probability that $s_n$ is visited in $K$ epochs is
\begin{align}
    O\left(K\left(\frac{\epsilon}{2}\right)^\frac{n}{2}\right).
\end{align}

Comparing these two probabilities gives us the result.
\end{proof}

\section{Additional Related Works}

To the best of knowledge, our work provides the first method to transfer knowledge among agents in RMAB. This setting is different from multi-agent transfer learning in Dec-POMDPs~\cite{qin2022multi,taylor2019parallel}. From the perspective of learning stability, our approaches provides additional robustness for learning in noisy environments, while robust multi-agent learning recently draws great research interests in Dec-POMDPs~\cite{wu2023reward,wu2022model,liu2024efficient,zhang2020robust,li2019robust,guo2022towards,bukharin2024robust,shi2024sample}. This paper also sits within the growing body of work integrating machine learning with game and economic theory and practice, e.g., \cite{hartford2016deep,kuo2020proportionnet,ravindranath2023data, wang2024deep, ravindranath2021deep, tacchetti2019neural, peri2021preferencenet,rahme2021permutation,wang2024multi,wang2024gemnet,dutting2024optimal,zhang2024position,dong2021birds,song2019convergence}. 

RMAB and RL in general have been widely used to model multiple interacting agents \cite{jaques2019social,yu2022surprising,shapley1953stochastic,littman1994markov,jin2021v,xie2020learning,behari2024decision, zhao2023integrating, choromanski2023efficient, seow2024improving, sehanobish2023scalable, boehmer2024optimizing}
For binary action RMABs, Whittle index based Q-learning approaches have been investigated \cite{nakhleh2021neurwin,xiong2023finite,fu2019towards, avrachenkov2022whittle}. Recent works also studied deep RL approaches for multi-action RMABs \cite{killian2021q,zhao2024towards,killian2022restless}. However, existing works on multi-action RMABs largely overlook the challenge of data errors, which is common in real-world scenarios. Our novel communication learning approach enables arms to mitigate the influence of systematic data errors.  

\end{document}